\theoremstyle{plain}
\newtheorem{theorem}{Theorem}[section]
\newtheorem{proposition}[theorem]{Proposition}
\theoremstyle{definition}
\newtheorem{assumption}[theorem]{Assumption}
\theoremstyle{remark}
\title{Advancing Autonomous VLM Agents via Variational Subgoal-Conditioned Reinforcement Learning}
\author{%
  Qingyuan Wu \thanks{Equal Contribution}\\
  University of Liverpool\\ 
  University of Southampton\\
  \And
  Jianheng Liu $^*$\\
  Huawei Noah's Ark Lab\\
  \AND
  Jianye Hao\\
  Tianjin University\\
  \And
  Jun Wang\\
  University College London\\
  \And
  Kun Shao\thanks{Correspondence to: Kun Shao, \texttt{shaokun2@huawei.com}}\\
  Huawei Noah's Ark Lab\\
  \And
}
\begin{document}

\maketitle

\begin{abstract}
State-of-the-art (SOTA) reinforcement learning (RL) methods have enabled vision-language model (VLM) agents to learn from interaction with online environments without human supervision. However, these methods often struggle with learning inefficiencies when applied to complex, real-world decision-making tasks with sparse rewards and long-horizon dependencies. We propose a novel framework, Variational Subgoal-Conditioned Reinforcement Learning (VSC-RL), advancing the VLM agents in resolving challenging decision-making tasks. Fundamentally distinct from existing methods, VSC-RL reformulates the decision-making problem as a variational subgoal-conditioned RL problem with the newly derived optimization objective, Subgoal Evidence Lower BOund (SGC-ELBO), which comprises two key components: (a) maximizing the subgoal-conditioned return, and (b) minimizing the divergence from a reference goal-conditioned policy.  We theoretically and empirically demonstrate that the VSC-RL can efficiently improve the learning efficiency without compromising performance guarantees. Across a diverse set of challenging benchmarks, including mobile device and web control tasks, VSC-RL consistently outperforms existing SOTA methods, achieving superior learning efficiency and performance.
\end{abstract}

\section{Introduction}

\label{sec:intro}

Recently, the large language models (LLMs) and vision-language models (VLMs) have demonstrated remarkable capabilities in content understanding and commonsense reasoning~\cite{yang2023auto, shen2024hugginggpt, hong2023metagpt}, achieving notable success in various real-world applications, such as visual question answering and visual captioning~\cite{bai2023qwen, chen2024internvl}.
These advancements highlight the strong potential of VLMs to tackle complex real-world decision-making problems (e.g., mobile device~\cite{toyama2021androidenv} and web control~\cite{zhou2023webarena} tasks) via building intelligent VLM agents through the advanced VLMs~\cite{yang2023appagent, zheng2024gpt}.
Meanwhile, after achieving impressive results in board games~\cite{silver2017mastering} and video games~\cite{berner2019dota}, reinforcement learning (RL) methods have been applied in online training VLM agents for tackling sequential decision-making tasks~\cite{bai2024digirl, qi2024webrl}.

Overall, based on the specific training paradigm, VLM agents can be categorized into three main types: prompting-based, imitation-based, and RL-based agents.
Directly leveraging VLMs (e.g., Gemini-1.5-Pro~\cite{team2024gemini} and GPT-4V~\cite{openai2023gpt4v}) to capture the critical information from the multimodal content, prompting-based agents aim to generate action via prompting engineering and retrieving techniques~\cite{yang2023appagent, yang2023set}.
The performance of the prompting-based agents is usually limited, as the weights of these VLMs can not be updated.
To address this limitation, some studies~\cite{zhang2023you, hong2024cogagent} employ imitation learning techniques to fine-tune the open-source VLMs using human demonstrations.
However, the performance of imitation-based agents is highly dependent on the quality and diversity of the demonstrations. Consequently, imitation-based agents may struggle with generalization and often underperform on out-of-distribution and unseen tasks.
Recently, RL-based agents have emerged as a promising solution. By incorporating RL techniques, these agents enable VLMs to tackle complex sequential decision-making problems~\cite{bai2024digirl, qi2024webrl}.
Nevertheless, existing RL-based agents often suffer from the learning efficiency issue in addressing challenging control tasks with sparse reward signals and complicated goals.
In many real-world scenarios, tasks require executing long sequences of actions, with rewards only provided upon successful completion. This delayed feedback poses a significant challenge for learning, fundamentally impacting the efficiency of RL-based agents.
Some existing works attempt to address this issue by introducing implicit curriculum~\cite{andrychowicz2017hindsight} or hand-crafted subgoals~\cite{dayan1992feudal, chane2021goal}. However, these existing approaches often fail to learn a proper policy in the real-world complex sequential decision-making task due to the complicated subgoal generation and curriculum design methodologies.

To address the fundamental limitations of RL-based agents mentioned above, we introduce Variational Subgoal-Conditioned RL (VSC-RL), a novel RL-based VLM agent method for enhancing learning efficiency in real-world complex sequential decision-making tasks.
Based on the perspective of variational inference, VSC-RL reformulates the decision-making task as the variational subgoal-conditioned RL problem, which is later efficiently solved by utilising extensive optimization techniques.
Additionally, VSC-RL utilises the significant reasoning and planning capabilities of VLM to autonomously decompose the complex goal into feasible subgoals.
Given the generated subgoals, VSC-RL optimizes the objective of SubGoal-Conditioned Evidence Lower BOund (SGC-ELBO), thus effectively improving learning efficiency, consisting of (a) maximizing the subgoal-conditioned return of the target agent and (b) minimizing the subgoal-conditioned difference with the reference agent. We theoretically derive the new objective of SGC-ELBO from the original optimization objective, ensuring both improved efficiency and performance guarantees.
Empirical results on various benchmarks validate our statement that VSC-RL significantly outperforms SOTA VLM agents in both sample efficiency and final performance.

In this paper, literature related to VLM agents and RL methods is discussed in \Cref{sec:related_works}.
We introduce notations related to goal-conditioned RL, variational RL and subgoal generator in \Cref{sec:preliminaries}. 
In \Cref{sec:approach}, we illustrate how to formulate the sequential decision-making problem as a variational subgoal-conditioned RL problem and derive the new optimization objective: SGC-ELBO, followed by the practical implementation of VSC-RL.
In \Cref{sec:experiment}, the experimental results over various benchmarks exhibit that our VSC-RL agent can achieve superior performance compared to existing SOTAs.
Overall, the main contributions of this paper are summarised as follows:
\begin{itemize}
    \item We propose VSC-RL, a novel variational subgoal-conditioned RL method for enhancing VLM agents in resolving real-world sequential decision-making problems.
    \item We theoretically show that SGC-ELBO, the optimization objective of the VSC-RL, can effectively improve learning efficiency while maintaining the performance guarantee.
    \item We experimentally show that VSC-RL significantly outperforms various SOTAs in both learning efficiency and final performance on various challenging benchmarks.
\end{itemize}

\section{Related Works}
\label{sec:related_works}

\subsection{VLM Agents for Decision-making}
In real-world complex control tasks requiring capacities in reasoning, planning and content understanding, it is necessary to enable agents with the vision-language models (VLMs).
In particular, the VLMs can process and abstract the image and language content for challenging decision-making tasks, especially in mobile device and web control tasks~\cite{toyama2021androidenv, liu2024visualagentbench}.
Existing VLM agents can be categorized as prompting-based, imitation-based, and RL-based agents based on the corresponding learning paradigms.
Additionally, some recent works explore using VLM to enhance agents' abilities.

\paragraph{Prompting-based Agent.}
Leveraging the inherent reasoning and planning abilities of prosperity VLMs (e.g., Gemini-1.5-Pro~\cite{team2024gemini} and GPT-4V~\cite{openai2023gpt4v}), the prompting-based agent makes decision via prompting engineering and retrieving techniques.
For instance, AppAgent~\cite{yang2023appagent} first introduces a unified prompting-based agent method to enable the vision-language model to directly interact with mobile applications by providing the prompts with details of actions.
Set-of-Marks~\cite{yang2023set} proposes a new prompting method to enhance the visual grounding ability of VLM.
However, the performance of these prompting-based agents is always sensitive to the prompts required to be manually and carefully designed.
Therefore, it is challenging for the prompting-based agent to directly output the correct and desired actions to address real-world complex control problems.

\paragraph{Imitation-based Agent.}
The imitation-based agent learns to mimic the expert behaviours by fine-tuning the policy on human demonstration.
Recently, Android in the Wild (AitW)~\cite{rawles2024androidinthewild} collected large-scale datasets of mobile device control tasks, enabling agents to directly learn from human experience.
AutoUI~\cite{zhang2023you} and CogAgent~\cite{hong2024cogagent} fine-tune the VLM-based policies with the AitW dataset, remarkably outperforming the prompting-based agent.
In order to adapt the fine-tuned agent to the online environment, Filtered BC~\cite{pan2024autonomous} introduces online imitation mechanisms to learn from successful online experiences. 
Unfortunately, these methods rely heavily on high-quality human demonstrations and often struggle to generalize to unseen tasks, limiting their application in diverse real-world scenarios.

\paragraph{RL-based Agent.}
Different to prompting-based and imitation-based agents, the RL-based agent can autonomously optimize the policy through trial-and-error interactions with environments, without human supervision.
DigiRL~\cite{bai2024digirl} introduces a unified offline-to-online RL framework that enables agents to learn directly from real-time interactions in dynamic environments, improving performance without the need for curated datasets.
DistRL~\cite{wang2024distrl} builds an asynchronous distributed RL system, allowing training multiple agents in parallel across different environments, thus significantly enhancing scalability and convergence speed.
WebRL~\cite{qi2024webrl} introduces a self-evolving online curriculum RL framework, enabling effective training of web agents through adaptive task generation in web control tasks.
However, these RL-based agents still fundamentally suffer from the learning efficiency issue in challenging sequential decision-making tasks with sparse rewards and long horizons.

\paragraph{Enhancing RL with VLM.}
Recent works have shown that VLM can enhance the RL method via its remarkable capacities of reasoning, planning, and content understanding.
Recent works suggest adopting VLM in reward-shaping for RL.
For instance, VLM-RMs~\cite{rocamonde2023vision} demonstrate that VLMs can serve as effective reward models for learning complex skills.
VLM can also generate the subgoals to guide the learning process for autonomous driving~\cite{pan2024vlp} and robot~\cite{yang2024guiding} tasks.
Nonetheless, it is still an open problem how to effectively integrate the VLM-generated subgoals into RL.

To mitigate the above issues, we present VSC-RL, which can autonomously decompose the goal into feasible subgoals by advanced VLM, and then efficiently resolve each subgoal from the principle of variational inference.

\subsection{Goal-conditioned and Variational RL}

\paragraph{Goal-conditioned RL.}
Sequential decision-making tasks can be viewed as the goal-conditioned RL problem~\cite{liu2022goal}.
Based on the current state, the agent aims to find the optimal policy that guides progress toward the given goal for maximizing the return.
Hindsight experience replay~\cite{andrychowicz2017hindsight} introduces an implicit curriculum learning method to enhance learning efficiency and robustness.
With the perspective of divide-and-conquer, some approaches suggest guiding the agent with subgoals as intermediate reward signals via imagination~\cite{chane2021goal, nair2019hierarchical} and tree-search~\cite{jurgenson2020sub, parascandolo2020divide}.

\paragraph{Variational RL.}
The RL problem can be viewed as the variational inference problem~\cite{levine2018reinforcement}, which can be resolved by utilising extensive optimization tools, thus effectively improving the learning efficiency.
Applying the expectation-maximization algorithm in the actor-critic method in RL, VIP~\cite{neumann2011variational} presents a unified variational inference framework.
MPO~\cite{abdolmaleki2018relative, abdolmaleki2018maximum} proposes a series of off-policy RL with entropy regulation in the manner of expectation-maximization. VDPO~\cite{wu2024variational} and CVPO~\cite{liu2022constrained} apply the variational inference techniques in addressing the RL problem with delayed signals and safety constraints, respectively.

This paper aims to show how to formulate the control problem as a variational subgoal-conditioned RL problem from the perspective of variational inference, which allows us to resolve the complicated control task by utilising extensive optimization tools.

\section{Preliminaries}
\label{sec:preliminaries}

\paragraph{Finite-Horizon Goal-Conditioned MDP.}
We formulate the RL problem as the finite horizon goal-conditioned Markov Decision Process (MDP), denoted by the tuple $<\mathcal{G}, \mathcal{S}, \mathcal{A}, \mathcal{R}, \mathcal{T}, H>$ where $\mathcal{G}$ is the goal set, $\mathcal{S}$ is the state space, $\mathcal{A}$ is the action space, $\mathcal{T}: \mathcal{S} \times \mathcal{A} \times \mathcal{S}\rightarrow [0, 1]$ is the dynamic function, $\mathcal{R}$ is the reward function and $H$ is the horizon.
At each timestep $t$, the agent takes action $a_t \in \mathcal{A}$ (e.g., typing text, press button or slide the screen) based on its policy $\pi: \mathcal{S}\times\mathcal{G}\times\mathcal{A} \rightarrow [0, 1]$, the current screenshot $s_t \in \mathcal{S}$, and a specific goal $g \in \mathcal{G}$ (e.g., search a new TV at Best Buy) selected in the beginning of each episode. The agent only receives the reward $r_t = 1$ if the goal $g$ is accomplished, otherwise the reward $r_t = 0$.
The objective of the agent is to find the policy $\pi$ which can accomplish all goals from the goal set $\mathcal{G}$ within the finite horizon $H$.

\paragraph{Variational RL.}
RL can be viewed as a variational inference problem. We denote the optimality of a trajectory $\tau$ is the event $O$, and the corresponding probability of the trajectory optimality is denoted as $p(O|\tau) \propto \exp\left(\frac{\mathcal{J}(\tau)}{\alpha}\right)$ where $\alpha$ is the temperature.
Therefore, the objective transforms to finding a policy $\pi$ with the highest log evidence: $\max_\pi \log p_{\pi}(O)$.
Furthermore, the Evidence Lower BOund of the objective is:
\begin{equation}
\label{eq:elbo}
    \mathop{\mathbb{E}}_{\tau \sim q(\tau)}[\log p(O|\tau)] - \text{KL}(q(\tau)||p_{\tau}(\tau)),
\end{equation}
where $q(\tau)$ is the prior trajectory distribution and $\text{KL}$ is the Kullback-Leibler divergence.
Thus, the objective of Variational RL is maximizing the ELBO (\Cref{eq:elbo}).

\paragraph{Subgoal Generator.}
For challenging control tasks with sparse and long-term reward signals, it is difficult to learn a useful policy that arrives at the final goal within a finite horizon.
Therefore, subgoal generation is particularly useful in providing the intermediate signals to facilitate learning.
Then, we introduce the assumption of the existence of subgoals for the given goal, aiming to bring the goal-conditioned RL problem to the subgoal-conditioned RL problem as follows.
\begin{assumption}[Existence of Subgoals]
\label{assumption:subgoal}
Given a trajectory $\tau$ and the corresponding goal $g$, it always exists a sequence of sub-trajectories and corresponding subgoals $\{\tau_i, sg_{i}\}_{i=1}^N (1\leq N \leq H)$ induced from the $\tau$ and $g$.    
\end{assumption}
Commonly adopted in literature~\cite{sutton1999between}, the above assumption is mild and usually holds. For instance, when $N=1$, the subgoals and sub-trajectories are the original goal and trajectory, respectively. When $N=H$, each sub-trajectory is composed of one single transition-tuple $(s_t, a_t, r_t, s_{t+1})$ with its corresponding subgoal.

\section{Our Approach: VSC-RL}

\label{sec:approach}

In this section, we present our approach, Variational Subgoal-Conditioned Reinforcement Learning (VSC-RL) for enhancing VLM agents in solving real-world decision-making tasks.
First, we formulate the sequential decision-making task as the variational goal-conditioned RL problem (\Cref{sec:vgcrl}).
Next, we derive the new subgoal-conditioned optimization objective, SGC-ELBO, consisting of (a) maximizing the subgoal-conditioned return (\Cref{proposition:soo}) and (b) minimizing the subgoal-conditioned difference (\Cref{proposition:subgoal_kl}).
We also theoretically show the derivation of new optimization objective, ensuring both improved learning efficiency and performance guarantees.
In \Cref{sec:vlm_as_subgoal_generator}, we demonstrate that VLMs can effectively generate feasible subgoals from the complex goal for VSC-RL.
The practical implementation is illustrated in \Cref{sec:practical_implementation}.
We present the overall pipeline of VSC-RL in \Cref{fig:Overview}, and the pseudo-code of VSC-RL is summarised in \Cref{alg::vlm_gs}.

\subsection{Problem Formulation}
\label{sec:vgcrl}
We first formulate the sequential decision-making as the variational goal-conditioned RL problem. In this context, similar to \Cref{eq:elbo}, the objective is to find a goal-conditioned policy $\pi$ with the highest log evidence: $\max_{\pi} \log p_{\pi}(O|g)$ for a given goal $g$.
Then, we have the Goal-Conditioned ELBO (GC-ELBO) of $\log p_{\pi}(O|\tau, g)$ as follows:

\begin{equation}
    \label{eq::gc_elbo}
    \begin{aligned}
    \text{GC-ELBO}(\pi, \pi_\text{ref}, g) = &{\mathop{\mathbb{E}}_{\tau \sim p_{\pi}(\tau| g)}\left[\log p(O|\tau, g)\right] - \text{KL}(p_{\pi}(\tau| g)||p_{\pi_\text{ref}}(\tau| g))}, \\   
    \end{aligned}
\end{equation}
where $p_{\pi_\text{ref}}(\tau| g)$ is the prior trajectory distribution of the goal-conditioned reference policy $\pi_\text{ref}$ for the given goal $g$.
Therefore, from \Cref{eq::gc_elbo}, the objective becomes maximizing the GC-ELBO: $\max_{\pi}\text{GC-ELBO}(\pi, \pi_\text{ref}, g)$.


\begin{figure*}[h]
\centering
\includegraphics[width=\textwidth]{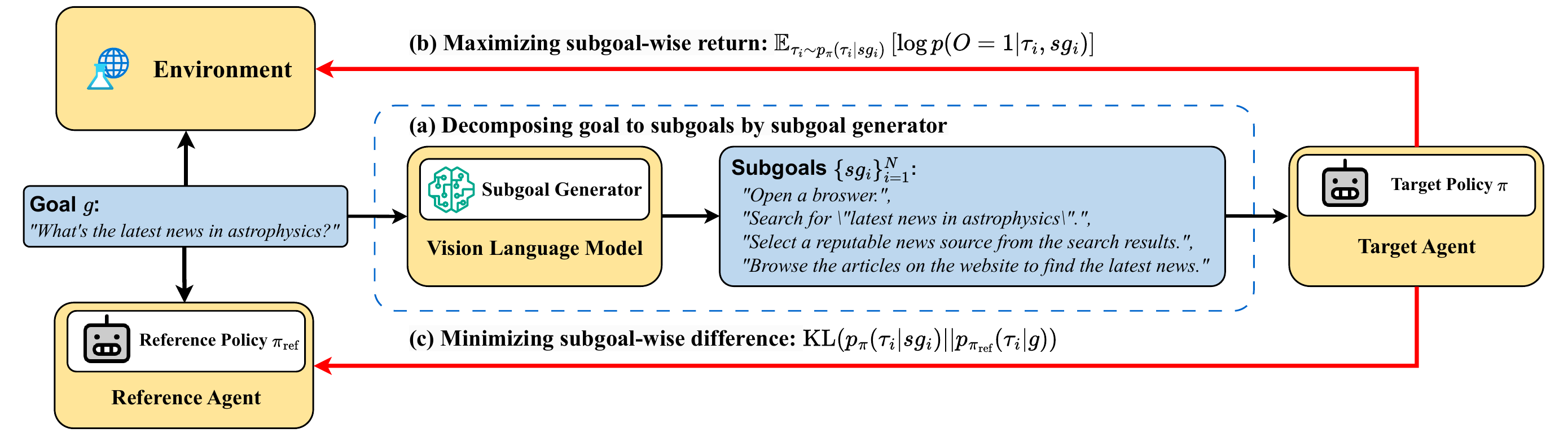}
\caption{The pipeline of VSC-RL. (a) VLM autonomously decomposes the goal $g$ to the subgoals $\{sg_i\}_{i=1}^N$. VSC-RL optimizes the objective of $\text{SGC-ELBO}$ consisting of (b) maximizing the subgoal-conditioned return and (c) minimizing the subgoal-conditioned difference.}
\label{fig:Overview}
\end{figure*}

\begin{algorithm}[h]
   \caption{VSC-RL}
   \label{alg::vlm_gs}
   \begin{algorithmic}
        \STATE {\bfseries Input:} goal $g$, subgoal generator $\text{VLM}$, reference policy $\pi_\text{ref}$, target policy $\pi$ and value function $V$;
        \FOR{Epoch = $1, \cdots$}
            \STATE Generate Subgoals $\{sg_i\}_{i=1}^N \sim \text{VLM}(g)$
            \STATE {Collect $(\tau_i, sg_i)_{i=1}^N$ from $\pi$ for the given goal $g$}
            \STATE \textcolor{black!30}{$\#$ Optimize the SGC-ELBO (\Cref{eq:obj_vlm})}
            \STATE Maximize Subgoal-conditioned Return via \Cref{eq:rl_pi} and \Cref{eq:rl_value}
            \STATE Minimize Subgoal-conditioned Behavior Difference via \Cref{eq:il_ref}
        \ENDFOR
        \STATE {\bfseries Output:} updated policy $\pi$
    \end{algorithmic}
\end{algorithm}
\subsection{Variational Subgoal-Conditioned RL}
\label{sec:approach_vscrl}
With the assumption of the subgoals (\Cref{assumption:subgoal}), we demonstrate that the former term of GC-ELBO (\Cref{eq::gc_elbo}) is equivalent to the maximizing subgoal-conditioned RL objective (\Cref{proposition:soo}) and the latter term of GC-ELBO can be transformed to the minimizing subgoal-conditioned difference (\Cref{proposition:subgoal_kl}).

Based on \Cref{eq::gc_elbo}, we show that the former term, $\mathop{\mathbb{E}}_{\tau \sim p_{\pi}(\tau| g)}\left[\log p(O|\tau, g)\right]$, can be reformulated in the subgoal-conditioned RL objective with shorter-horizon in the following \Cref{proposition:soo}.

\begin{proposition}[Subgoal-Conditioned Optimization Objective, Proof in \Cref{appendix:soo}]
    \label{proposition:soo}
    Given a goal $g$ with corresponding subgoals $\{sg_i\}_{i=1}^N$ and a subgoal-conditioned target policy $\pi$, the objective of
    $$
    \max_{\pi}\mathop{\mathbb{E}}_{\tau \sim p_{\pi}(\tau| g)}\left[\log p(O|\tau, g)\right]
    $$
    is equivalent to the objective of
    $$
    \max_{\pi}\sum_{i=1}^N\left[\mathop{\mathbb{E}}_{\tau_{i} \sim p_{\pi}(\tau_{i}| sg_{i})}\left[\log p(O|\tau_{i}, sg_{i})\right]\right].
    $$
\end{proposition}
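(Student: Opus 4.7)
The overall plan is to translate the log-evidence into a return via the defining relation $\log p(O|\tau,g) = \mathcal{J}(\tau,g)/\alpha + C$, then use Assumption 1 to decompose both the return and the trajectory distribution along the subgoal boundaries, and finally apply linearity of expectation. Because $\alpha$ is a positive constant and $C$ does not depend on $\pi$, neither affects the $\arg\max$, so equivalence of the two objectives reduces to equivalence of the subgoal-wise decomposed expected return with the original expected return.

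The steps I would carry out in order are as follows. First, invoke Assumption 1 to write $\tau = \tau_1 \circ \tau_2 \circ \cdots \circ \tau_N$ together with the induced subgoals $\{sg_i\}_{i=1}^N$, where the terminal state of $\tau_i$ is the initial state of $\tau_{i+1}$. Second, argue that the trajectory return decomposes additively, $\mathcal{J}(\tau,g) = \sum_{i=1}^{N}\mathcal{J}(\tau_i, sg_i)$; in the sparse-reward setting of Section \ref{sec:preliminaries} this is immediate once one interprets the per-subgoal reward $r_t$ as the indicator of completing $sg_i$ within $\tau_i$, so the rewards partition cleanly across sub-trajectories. Third, factorize the trajectory distribution by the Markov property of the goal-conditioned MDP conditional on subgoals:
\begin{equation*}
p_{\pi}(\tau \mid g) \;=\; \prod_{i=1}^{N} p_{\pi}(\tau_i \mid sg_i),
\end{equation*}
using that $\pi$ is subgoal-conditioned and that the chain of subgoals resolves the dependence between consecutive sub-trajectories. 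Fourth, combine these ingredients and exchange sum and expectation:
\begin{equation*}
\mathbb{E}_{\tau \sim p_\pi(\tau\mid g)}\bigl[\log p(O\mid \tau,g)\bigr]
\;=\; \sum_{i=1}^{N}\mathbb{E}_{\tau_i \sim p_\pi(\tau_i \mid sg_i)}\bigl[\log p(O\mid \tau_i, sg_i)\bigr] + C',
\end{equation*}
where $C'$ absorbs $\pi$-independent constants. Taking $\arg\max_\pi$ on both sides yields the claim.

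I expect the main obstacle to be justifying the product factorization $p_\pi(\tau\mid g) = \prod_i p_\pi(\tau_i\mid sg_i)$ rigorously. Assumption 1 as stated only asserts the \emph{existence} of the decomposition of $\tau$ and $g$, not conditional independence across sub-trajectories, so I would need to lean explicitly on (i) the Markov property of the underlying goal-conditioned MDP, and (ii) the interpretation that a subgoal-conditioned policy $\pi(\cdot \mid s, sg_i)$ on sub-trajectory $i$ depends on prior segments only through the hand-off state and the current subgoal. A secondary, lighter subtlety is the reward-accounting step: in the truly sparse regime where $r_t=1$ only upon achieving the final goal $g$, an additive decomposition requires redefining the subgoal reward to mark the completion of each $sg_i$, which is the implicit convention I would adopt and state explicitly. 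Once those two points are pinned down, the remaining manipulations are straightforward applications of Fubini and linearity.
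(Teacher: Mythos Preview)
Your proposal is correct and follows essentially the same approach as the paper's own proof: translate $\log p(O\mid\tau,g)$ into the return $\mathcal{J}(\tau\mid g)$, decompose the return additively as $\sum_i \mathcal{J}(\tau_i\mid sg_i)$, factorize $p_\pi(\tau\mid g)=\prod_i p_\pi(\tau_i\mid sg_i)$, and apply linearity of expectation. If anything, you are more careful than the paper, which silently uses both the product factorization and the additive reward decomposition without the justifications you flag as obstacles.
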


In the above proposition, the goal-wise objective has been transformed into the subgoal-conditioned objective, which is composed of $N$ subgoals with corresponding shorter horizons.
Thus, the agent can learn from these reward signals from the subgoals, thus effectively improving the learning efficiency~\cite{jiang2018open}.

Next, we show that the latter term in \Cref{eq::gc_elbo}, $\text{KL}(p_{\pi}(\tau| g)||p_{\pi_\text{ref}}(\tau| g))$, has the subgoal-conditioned upper bound in the following proposition.

\begin{proposition}[Subgoal-conditioned Difference Bound, Proof in \Cref{appendix:sdb}]
\label{proposition:subgoal_kl}
Given goal-conditioned reference policy $\pi_\text{ref}$ and subgoal-conditioned target policy $\pi$, the goal-conditioned KL divergence of a given goal $g$ has the upper bound of subgoal-conditioned KL divergence of corresponding subgoals $\{sg_i\}_{i=1}^N$ as follows:
$$
    \text{KL}(p_{\pi}(\tau| g)||p_{\pi_\text{ref}}(\tau| g)) 
    \leq
    \sum_{i=1}^N\left[\text{KL}(p_{\pi}(\tau_i| sg_i)||p_{\pi_\text{ref}}(\tau_i| g))\right].
$$
\end{proposition}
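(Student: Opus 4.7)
The plan is to exploit the subgoal decomposition to factorize the trajectory-level KL divergence across sub-trajectories. First, invoking \Cref{assumption:subgoal}, I would decompose $\tau = (\tau_1, \ldots, \tau_N)$ with the induced subgoals $\{sg_i\}_{i=1}^N$. Since $\pi$ is subgoal-conditioned, each $sg_i$ is determined by $g$, and the MDP dynamics are Markov, the target trajectory distribution factorizes as $p_{\pi}(\tau|g) = \prod_{i=1}^N p_{\pi}(\tau_i \mid sg_i)$, because the behaviour on segment $i$ depends on the past only through the current state (implicitly absorbed) and the current subgoal.

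Second, I would introduce the product surrogate $\tilde{p}_{\pi_\text{ref}}(\tau|g) \coloneqq \prod_{i=1}^N p_{\pi_\text{ref}}(\tau_i|g)$ for the reference trajectory distribution. Since $p_{\pi}(\tau|g)$ already factorizes, the tensorization identity for the KL divergence of product distributions immediately yields
\[
\text{KL}\bigl(p_{\pi}(\tau|g) \,\|\, \tilde{p}_{\pi_\text{ref}}(\tau|g)\bigr) = \sum_{i=1}^N \text{KL}\bigl(p_{\pi}(\tau_i|sg_i) \,\|\, p_{\pi_\text{ref}}(\tau_i|g)\bigr),
\]
which is exactly the right-hand side of the proposition. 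It then remains to show that swapping $\tilde{p}_{\pi_\text{ref}}$ for the true joint $p_{\pi_\text{ref}}(\tau|g)$ can only decrease the divergence; after cancelling the common $-H(p_{\pi}(\tau|g))$ entropy term, this reduces to verifying
\[
\mathbb{E}_{p_{\pi}(\tau|g)}\bigl[\log p_{\pi_\text{ref}}(\tau|g)\bigr] \;\geq\; \mathbb{E}_{p_{\pi}(\tau|g)}\Bigl[\textstyle\sum_{i=1}^N \log p_{\pi_\text{ref}}(\tau_i|g)\Bigr].
\]
I would attack this inequality by applying the chain rule to $p_{\pi_\text{ref}}(\tau|g)$ and combining it with Jensen's inequality on the marginalization identity $p_{\pi_\text{ref}}(\tau_i|g) = \mathbb{E}_{p_{\pi_\text{ref}}(\tau_{<i}|g)}[p_{\pi_\text{ref}}(\tau_i|\tau_{<i},g)]$, so that the additional dependence on $\tau_{<i}$ under the reference collapses into the marginal factor.

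\textbf{Main obstacle.} The crux is this last cross-entropy comparison. Jensen naturally provides a bound involving an expectation under $p_{\pi_\text{ref}}$'s own marginal over $\tau_{<i}$, whereas the chain-rule expansion of $\text{KL}(p_{\pi}\|p_{\pi_\text{ref}})$ instead produces expectations under $p_{\pi}(\tau_{<i}|g)$. Resolving this distribution mismatch cleanly will likely require invoking conditional independence of the reference's sub-trajectories across subgoal boundaries — which is natural since each $\tau_i$ is a Markov segment starting from the terminal state of $\tau_{i-1}$ — so that $p_{\pi_\text{ref}}(\tau_i|g)$ can be identified with the reference's conditional distribution starting from the segment's initial state, at which point the comparison becomes tight.
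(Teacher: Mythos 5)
Your route is essentially the paper's: factorize the target trajectory distribution as $p_{\pi}(\tau|g)=\prod_{i=1}^N p_{\pi}(\tau_i|sg_i)$, compare against the product reference $\prod_{i=1}^N p_{\pi_\text{ref}}(\tau_i|g)$, and invoke tensorization of the KL divergence. The difference is that the paper does not treat $\prod_i p_{\pi_\text{ref}}(\tau_i|g)$ as a surrogate to be related to the true joint afterwards — it simply \emph{asserts} the factorization $p_{\pi_\text{ref}}(\tau|g)=\prod_{i=1}^N p_{\pi_\text{ref}}(\tau_i|g)$ as a modeling assumption parallel to the one on $p_\pi$, after which your tensorization identity already yields the claim (indeed with equality; the paper's remaining ``$\leq$'' steps are vacuous under its own assumptions).

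The obstacle you flag is therefore real, and you should know it is not merely a technical nuisance: the cross-entropy inequality you propose to prove, $\mathbb{E}_{p_{\pi}}[\log p_{\pi_\text{ref}}(\tau|g)]\geq\mathbb{E}_{p_{\pi}}[\sum_i\log p_{\pi_\text{ref}}(\tau_i|g)]$, is \emph{false} for a general joint $p_{\pi_\text{ref}}(\tau|g)$ with the prescribed marginals. A two-segment counterexample: let $p_\pi$ put all mass on $(\tau_1,\tau_2)=(0,0)$, and let the reference have uniform marginals but joint mass $0.2<0.25$ on $(0,0)$ (negatively correlated segments); then $\text{KL}(p_\pi\|p_{\pi_\text{ref}})=\log 5>\log 4=\sum_i\text{KL}(p_{\pi,i}\|p_{\pi_\text{ref},i})$, so the claimed upper bound fails. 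No application of Jensen or the chain rule will rescue this, because the sign of the multi-information-type term $\mathbb{E}_{p_\pi}[\log(p_{\pi_\text{ref}}(\tau|g)/\prod_i p_{\pi_\text{ref}}(\tau_i|g))]$ genuinely depends on the correlation structure of the reference across segment boundaries. The fix is exactly the one you gesture at in your last sentence, but it must be promoted from a hope to an explicit hypothesis: assume (as the paper implicitly does, alongside \Cref{assumption:subgoal}) that the reference policy's sub-trajectories are conditionally independent across subgoal boundaries, i.e., $p_{\pi_\text{ref}}(\tau|g)=\prod_{i=1}^N p_{\pi_\text{ref}}(\tau_i|g)$. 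With both factorizations in hand, your first display is the entire proof and the bound holds with equality; note that this assumption is itself nontrivial, since consecutive sub-trajectories share a boundary state.
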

Therefore, from \Cref{proposition:subgoal_kl}, we can directly minimize the $N$ subgoal-conditioned KL divergences, which is the upper bound of the goal-conditioned KL divergence.

Based on \Cref{proposition:soo} and \Cref{proposition:subgoal_kl}, the newly-derived optimization objective of SubGoal-Conditioned ELBO (SGC-ELBO) is as follows:
\begin{equation}
    \label{eq::sgc_elbo}
    \begin{aligned}
    \text{SGC-ELBO}(\pi, \pi_\text{ref}, sg_i, g) = &{\mathop{\mathbb{E}}_{\tau_i \sim p_{\pi}(\tau_i| sg_i)}\left[
    \log p(O|\tau_i, sg_i)
    \right]
     - \text{KL}(p_{\pi}(\tau_i| sg_i)||p_{\pi_\text{ref}}(\tau_i| g))}. \\   
    \end{aligned}
\end{equation}
\Cref{eq::sgc_elbo} consists of two key components: (a) maximizing the subgoal-conditioned return of the target policy $\pi$ and (b) minimizing the subgoal-conditioned difference between $\pi$ and the reference policy $\pi_\text{ref}$.
Therefore, the agent can directly learn to resolve the subgoal $sg_i$ with a shorter horizon requirement, effectively improving the learning efficiency.
The newly derived optimization objective of SGC-ELBO (\Cref{eq::sgc_elbo}) can improve learning efficiency without compromising performance guarantees.

\subsection{Autonomous Subgoal Generation via Vision-Language Models}
\label{sec:vlm_as_subgoal_generator}

\begin{figure*}[h]
\centering
\includegraphics[width=1.0\textwidth]{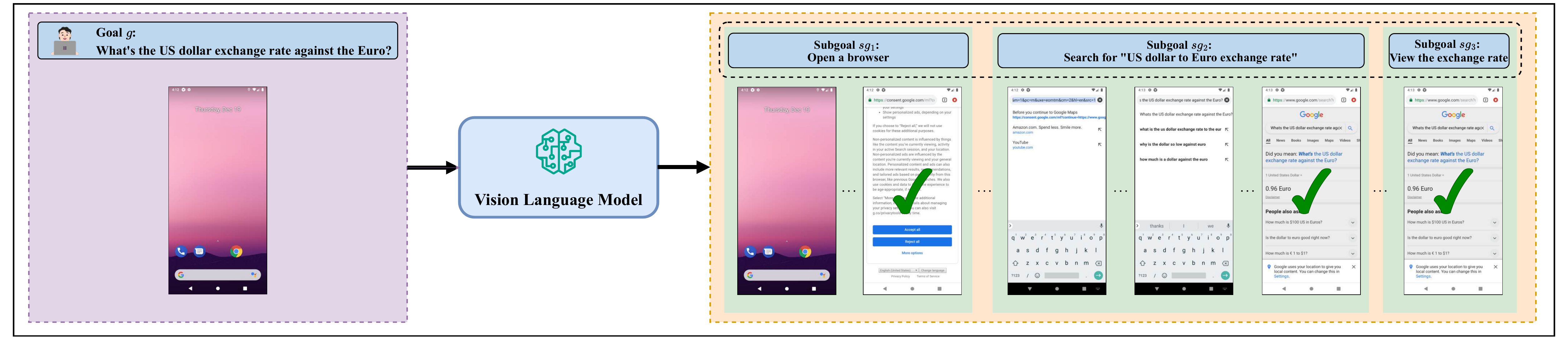}
\caption{Autonomous subgoal generation in AitW task. The VLM autonomously decomposes the goal of the complicated mobile device control task into easily achievable subgoals.
}
\label{fig:subgoal_generator}
\end{figure*}

For real-world complex decision-making, it is challenging to handcraft and design the subgoals for each goal manually.
VLM has exhibited a unique reasoning ability in image captioning, visual question answering, and multimodal reasoning via integrating and interpreting visual and textual information to derive meaningful insights for VLM agents. 
Therefore, we use VLM as the subgoal generator, which autonomously decomposes the given goal $g$ into the feasible subgoals $\{sg_i\}_{i=1}^N$.
As demonstrated in the AitW task example (\Cref{fig:subgoal_generator}), VLM can autonomously decompose the goal: \textit{"What's the US dollar exchange rate against the Euro?"} into more specific and easily solvable subgoals, including \textit{"Open a browser"}, \textit{"Search for "US dollar to Euro exchange rate"}, and \textit{"View the exchange rate"}.
Therefore, we can tell that the VLM can serve as the subgoal generator in the general case.
Additionally, we also provide the example of subgoal generation in \Cref{appendix:prompt_example}.
In the context of VLM as the autonomous vision-language subgoal generator, the optimization objective (\Cref{eq::sgc_elbo}) can be written as
\begin{equation}
    \label{eq:obj_vlm}
    \begin{aligned}
    &\max_{\pi}\left[\sum_{\{sg_{i}\}_{i=1}^N \sim \text{VLM}(g)}\left[ \text{SGC-ELBO}(\pi, \pi_\text{ref}, sg_i, g)\right]\right],\\
    \end{aligned}
\end{equation}
where subgoals $\{sg_{i}\}_{i=1}^N$ are generated by a VLM through prompting with the original goal $g$.

\subsection{Practical Implementation}
\label{sec:practical_implementation}
As a unified RL-based agent framework, most existing RL-based methods can easily be embedded in VSC-RL.
In this paper, we mainly consider the mobile device and web control tasks to evaluate the VSC-RL, a representative and challenging real-world decision-making task which has drawn attention recently.
Specifically, the reference agent $\pi_\text{ref}$ and target agent $\pi$ are both initialised as the AutoUI-Base agent~\cite{zhang2023you}, which is pre-trained on the Android in the Wild (AitW) datasets.
To maximize the subgoal-conditioned RL objective in \Cref{eq::sgc_elbo}, VSC-RL uses the Advantage-Weighted Regression (AWR) algorithm~\cite{peng2019advantage} modified by DigiRL~\cite{bai2024digirl} as follows:
\begin{equation}
    \label{eq:rl_pi}
    \mathop{\arg\max}_{\pi}\mathop{\mathbb{E}}_{s,a,sg_i \sim \mathcal{D}}
    \left[
        \log\pi(a|s, sg_i)\exp\left(\frac{A(s, a, sg_i)}{\beta}\right)
    \right],
\end{equation}
where $\mathcal{D}$ is the replay buffer, $\beta$ is the hyperparameter and $A(s, a, sg_i):= R_i - V(s, a, sg_i)$ is the advantage function which aims to predict the return $R_i$ of the subgoal $sg_i$ as follows:
\begin{equation}
    \label{eq:rl_value}
    \mathop{\arg\min}_{V}\mathop{\mathbb{E}}_{s, a, sg_i, R_i \sim \mathcal{D}}
    \left[
        ||R_i - V(s, a, sg_i)||
    \right],
\end{equation}
where $R_i$ is the binary return evaluated by the VLM~\cite{pan2024autonomous} and $V(s, a, sg_i)$ is the subgoal-conditioned value function.
VSC-RL minimizes the subgoal-conditioned KL divergence in \Cref{eq::sgc_elbo} via imitation loss as follows:
\begin{equation}
    \label{eq:il_ref}
    \mathop{\arg\max}_{\pi}\mathop{\mathbb{E}}_{
        a_\text{ref} \sim \pi_\text{ref}(\cdot|s, g)\atop
        s, sg_i, g \sim \mathcal{D}
    }
    \left[
        \log\pi(a_\text{ref}|s, sg_i)
    \right],
\end{equation}
where $a_\text{ref}$ is the reference action.
Similar to DigiRL~\cite{bai2024digirl}, VSC-RL additionally learns the instruction-level value function for filtering the sub-trajectories and accelerating the learning process.

VSC-RL adopts Gemini-1.5-Pro~\cite{team2024gemini} as the subgoal generator.
Specifically, we in-context prompt the VLM to generate the subgoals for a given goal, including human demonstration as examples. The prompt example is provided in \Cref{appendix:prompt_example}.
Overall, the pseudo-code of VSC-RL is summarised in \Cref{alg::vlm_gs}.

\section{Experiments}
\label{sec:experiment}
In this section, we empirically demonstrate that our VSC-RL can achieve better sample efficiency and a higher success rate than various state-of-the-art (SOTA) agents in various challenging benchmarks, including AitW~\cite{rawles2024androidinthewild} and WebArena-Lite~\cite{liu2024visualagentbench}.
We also present the ablation results for evaluating the key components of VSC-RL.
The implementation details and hyperparameter settings are listed in \Cref{appendix:implementation_detail}.
Additionally, we present the additional experiments on MiniGrid~\cite{babyai_iclr19} in \Cref{appendix:additional_exp_minigrid}.
Additional experiments investigating the subgoal generator in VSC-RL are presented in \Cref{appendix:additional_exp_subgoal_generator}.

\subsection{Experimental Settings}

\paragraph{Benchmarks.}
For the complex and challenging problem, we mainly consider AitW General and Web Shopping tasks~\cite{rawles2024androidinthewild}, two kinds of the most challenging device control tasks for evaluation.
The horizons of General and Web Shopping tasks are set to 10 and 20 steps, respectively.
The success of the task is autonomously evaluated by the Gemini-1.5-Pro~\cite{team2024gemini} via the in-context prompting approach.
We also evaluate VSC-RL on WebArena-Lite~\cite{liu2024visualagentbench}, a human-verified subset of the WebArena benchmark~\cite{zhou2023webarena} containing 165 realistic web tasks across five websites. 
Each task involves complex HTML-based observations with 30 steps of horizon. Following WebRL~\cite{qi2024webrl}, we adopt the pretrained outcome-supervised reward model (ORM) to autonomously evaluate the task's success.
\paragraph{Baselines.}
For the AitW tasks, we compare our VSC-RL with various SOTA baselines, including prompting-based agents (Set-of-Marks~\cite{yang2023set} and AppAgent~\cite{zhang2023appagent}), imitation-based agents (AutoUI~\cite{zhang2023you}, CogAgent~\cite{hong2024cogagent} and Filtered BC~\cite{pan2024autonomous}) and RL-based agents (DigiRL~\cite{bai2024digirl}). Each method is tested on $3$ independent runs, consistent with existing works~\cite{bai2024digirl}. 
For the WebArena-Lite tasks, we compare VSC-RL with several SOTA baselines adapted to web environments, including supervised fine-tuning (SFT), Filtered BC~\cite{pan2024autonomous}, and RL-based agents (AWR~\cite{peng2019advantage}, DigiRL~\cite{bai2024digirl}, and WebRL~\cite{qi2024webrl}). 
Specifically, to ensure a fair comparison, we remove the curriculum learning component of WebRL so that all methods are trained solely on the same task set. 
Each method is tested on $1$ single run, consistent with existing works~\cite{qi2024webrl}. 

\subsection{Experimental Results and Analysis}
\label{sec:exp_main}

\begin{figure*}[h]
\centerline{
    \subfigure[General]{\includegraphics[width=0.5\linewidth]{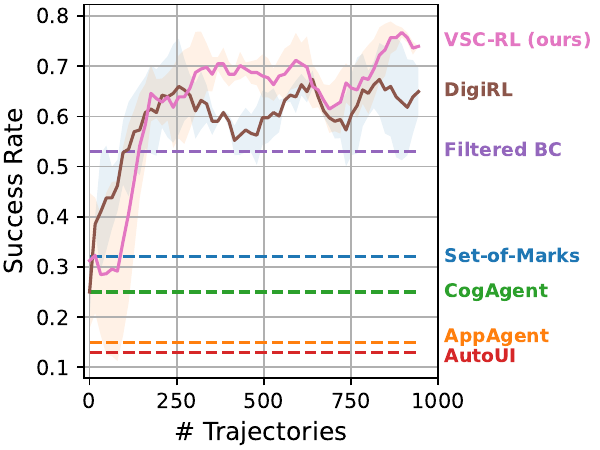}}
    \subfigure[Web Shopping]{\includegraphics[width=0.5\linewidth]{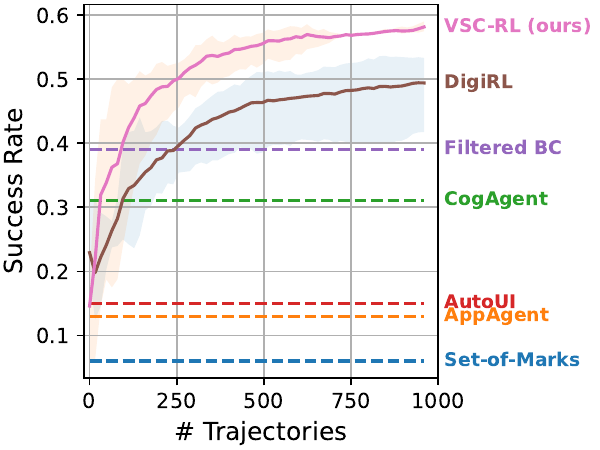}}
}
\caption{Learning curves on AitW (a) General and (b) Web Shopping tasks. \label{fig:main}}
\end{figure*}
\begin{table*}[h]
\centering
\caption{The evaluated performance on the train and test datasets of the General and Web Shopping tasks. The best performance is \textbf{\underline{highlighted}}.\label{table:main}}
\scalebox{0.8}{
\begin{tabular}{cc|ccccccc}
\hline
\multicolumn{2}{c|}{Task}                                  & Set-of-Marks & AppAgent & CogAgent & AutoUI & Filtered BC & DigiRL & VSC-RL (ours) \\ \hline
\multicolumn{1}{c}{\multirow{2}{*}{General}}      & Train & 32.3\%         & 14.6\%     & 25.0\%     & 12.5\%   & 53.5\%        & 64.9\%   & \textbf{\underline{73.9\%}} \\
\multicolumn{1}{c}{}                              & Test  & 16.7\%         & 16.7\%     & 25.0\%     & 14.6\%   & 62.5\%        & 67.7\%   & \textbf{\underline{72.9\%}} \\ \hline
\multicolumn{1}{c}{\multirow{2}{*}{Web Shopping}} & Train & 6.3\%          & 5.2\%      & 31.3\%     & 14.6\%   & 53.6\%        & 55.3\%   & \textbf{\underline{64.0\%}} \\
\multicolumn{1}{c}{}                              & Test  & 11.5\%         & 8.3\%      & 38.5\%     & 17.7\%   & 54.2\%        & 41.3\%   & \textbf{\underline{59.0\%}} \\ \hline
\end{tabular}
}
\end{table*}

\paragraph{AitW.}
The learning curves of AitW General and Web Shopping are summarised in \Cref{fig:main}.
Overall, our VSC-RL outperforms other baselines significantly in both the General and Web Shopping tasks.
The RL-based agents (DigiRL and VSC-RL) both show leading performance in the AitW General task.
After reaching a similar performance of $65.0\%$ success rate with DigiRL in 250 trajectories, our VSC-RL outperforms all baselines significantly, arriving at the best final performance of $75\%$ success rate.
Similarly, RL-based agents (DigiRL and VSC-RL) dominate all other types of agents remarkably in the Web Shopping task.
Specifically, our VSC-RL can finally achieve around $60.0\%$ success rate, significantly outperforming $50.0\%$ success rate of DigiRL.
We also evaluate the generalization of our VSC-RL on the test datasets, including a range of unseen tasks, respectively.
The results summarised in \Cref{table:main} tell us that our VSC-RL shows significant superiority in both the train and test datasets.
Especially, in the general tasks, VSC-RL performs approximately $+13.9\%$ and $+7.7\%$ better than the second-best baseline on the train and test datasets, respectively.
Similarly, VSC-RL achieves the best performance on both the train and test datasets of web shopping tasks.
Overall, VSC-RL can exhibit consistent performance on unseen tasks, showing remarkable generalization ability.

\paragraph{WebArena-Lite.}
As shown in ~\Cref{fig:web_aerana}, we present the learn curves on the WebArena-Lite.
The imitation-based agents, SFT and Filtered BC achieve relatively limited performance, with success rates stagnating around $20.6\%$ and $23.0\%$, respectively.
AWR achieves approximately $28.5\%$ success rate via solely leveraging the offline RL technique, which is still limited compared to the online RL methods.
DigiRL and WebRL exhibit similar performance trends, both plateauing around a $31.0\%$ success rate.
Our VSC-RL consistently outperforms the other methods, achieving the highest success rate of approximately $34.5\%$. 
Overall, our VSC-RL can achieve superior learning efficiency and final performance on the WebArena-Lite.
Specifically, the additional results shown in \Cref{appendix:additional_exp_web} demonstrate that our VSC-RL can achieve the best performance across all types of tasks on the WebArena-Lite.

\paragraph{Ablation Results of VSC-RL.}
The ablation results on the Web Shopping task, shown in ~\Cref{table:ablation}, evaluate the impact of various key components of VSC-RL.
Introducing subgoals to AutoUI can yield a marginal improvement ($+1.6\%$) on the average success rate from $16.2$\% to $17.8\%$, which offers limited benefit without involving the RL process.
For VSC-RL, removing subgoals entirely leads to a substantial performance drop ($-7.8\%$) compared to providing limited ($50\%$ less) subgoals ($-5.2\%$). 
These results imply that subgoals can efficiently improve performance by providing immediate informative reward signals in the RL process.
Additionally, we investigate the different optimization components of SGC-ELBO(\cref{eq::sgc_elbo}) in VSC-RL.
Removing the policy gradient (~\Cref{eq:rl_pi}) and imitation loss (~\Cref{eq:il_ref}) result in decreased performance with $-7.8\%$ and $-10.4\%$, respectively.
Overall, each optimization component of VSC-RL contributes meaningfully to its effectiveness, aligning with our main statements in \Cref{sec:approach_vscrl}.

\begin{figure*}[t]
    \centering
    \begin{minipage}[h]{0.48\textwidth}
        \centering
        \includegraphics[width=\linewidth]{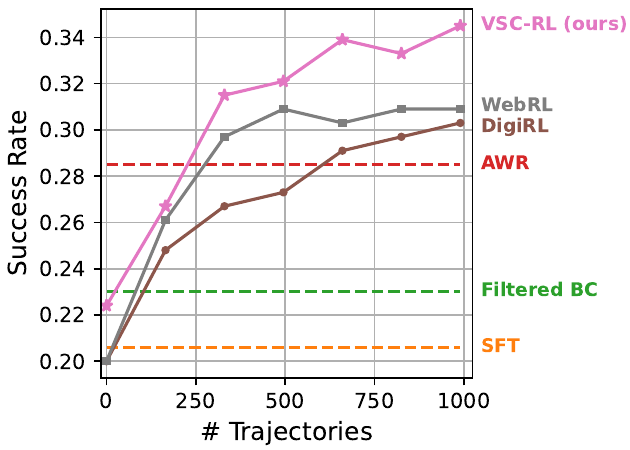}
        \caption{Learning curves on WebArena-Lite.\label{fig:web_aerana}}
    \end{minipage}
    \begin{minipage}[h]{0.5\textwidth}
        \centering
        \captionof{table}{Ablation results of VSC-RL on the Web Shopping. The best performance is \textbf{\underline{highlighted}}.\label{table:ablation}}
        \scalebox{0.7}{
        \begin{tabular}{l|ccc}
        \hline
        \multirow{2}{*}{Method} & \multicolumn{3}{c}{Web Shopping}                    \\
                                & Train           & Test            & Average         \\ \hline
        AutoUI                  & 14.6\%          & 17.7\%          & 16.2\%          \\
        \quad w/ subgoals             & 16.7\%          & 18.8\%          & 17.8\%          \\ \hline
        VSC-RL                  & \textbf{\underline{64.0\%}} & \textbf{\underline{59.0\%}} & \textbf{\underline{61.5\%}} \\
        \quad w/o subgoals            & 55.2\%          & 52.1\%          & 53.7\%          \\
        \quad w/ limited ($50\%$ less) subgoals     & 57.3\%          & 55.2\%          & 56.3\%          \\
        \quad w/o policy gradient (~\Cref{eq:rl_pi})     & 56.3\%          & 51.0\%          &  53.7\%          \\
        \quad w/o imitation loss (~\Cref{eq:il_ref})      & 55.2\%          & 46.9\%          & 51.1\%    \\ \hline
        \end{tabular}
        }
    \end{minipage}
\end{figure*}
\subsection{Limitations and Challenges}
\label{sec:limitation}
We have empirically demonstrated that our VSC-RL can effectively address the learning efficiency issue commonly existing in complex sequential decision-making tasks.
However, there are still some limitations and challenges in VSC-RL, as discussed below.

\paragraph{Fine-tuning VLM as Subgoal Generator.}
Benefiting from the general reasoning ability of the proprietary VLM, we empirically found that the performance of VSC-RL is improved by the feasible subgoals.
However, for the control task from a specific domain, it is worth fine-tuning the open-source VLM as the subgoal generator.

\paragraph{Hierarchical RL Approaches.}
Additionally, the VLM in VSC-RL can not only be viewed as the subgoal generator, but also as the high-level policy in the context of hierarchical RL. It is valuable to investigate enhancing VSC-RL with the hierarchical RL approaches~\cite{zhao2024epo, zhouarcher}.

\paragraph{Future Challenging Applications.}
In this work, we mainly consider the mobile device and web control tasks, two representative complex control problems, as the evaluation benchmarks.
The theoretical and empirical results presented in this work imply that VSC-RL has great potential in addressing other challenging open problems, such as embodied agent learning and robotics learning.

\section{Conclusion}
This work investigates advancing VLM agents in resolving real-world complex sequential decision-making tasks.
Existing promising RL-based agents often suffer from the learning efficiency issue in solving tasks with complicated goals and sparse reward signals.
To address this issue, we propose VSC-RL, which can autonomously decompose the goal to subgoals and resolve them efficiently.
VSC-RL reformulates the decision-making task as a variational subgoal-conditioned RL problem with the new derived optimization objective of SGC-ELBO, thus effectively improving the learning efficiency without comprising the performance guarantee.
In various benchmarks, especially in challenging mobile device and web control tasks, we empirically show that VSC-RL exhibits significant performance improvement and learning efficiency, remarkably outperforming existing methods.


\newpage
\bibliographystyle{abbrv}
\bibliography{refs}

\begin{thebibliography}{10}

\bibitem{abdolmaleki2018relative}
A.~Abdolmaleki, J.~T. Springenberg, J.~Degrave, S.~Bohez, Y.~Tassa, D.~Belov, N.~Heess, and M.~Riedmiller.
\newblock Relative entropy regularized policy iteration.
\newblock {\em arXiv preprint arXiv:1812.02256}, 2018.

\bibitem{abdolmaleki2018maximum}
A.~Abdolmaleki, J.~T. Springenberg, Y.~Tassa, R.~Munos, N.~Heess, and M.~Riedmiller.
\newblock Maximum a posteriori policy optimisation.
\newblock {\em arXiv preprint arXiv:1806.06920}, 2018.

\bibitem{andrychowicz2017hindsight}
M.~Andrychowicz, F.~Wolski, A.~Ray, J.~Schneider, R.~Fong, P.~Welinder, B.~McGrew, J.~Tobin, O.~Pieter~Abbeel, and W.~Zaremba.
\newblock Hindsight experience replay.
\newblock {\em Advances in neural information processing systems}, 30, 2017.

\bibitem{bai2024digirl}
H.~Bai, Y.~Zhou, M.~Cemri, J.~Pan, A.~Suhr, S.~Levine, and A.~Kumar.
\newblock Digirl: Training in-the-wild device-control agents with autonomous reinforcement learning.
\newblock {\em arXiv preprint arXiv:2406.11896}, 2024.

\bibitem{bai2023qwen}
J.~Bai, S.~Bai, S.~Yang, S.~Wang, S.~Tan, P.~Wang, J.~Lin, C.~Zhou, and J.~Zhou.
\newblock Qwen-vl: A frontier large vision-language model with versatile abilities.
\newblock {\em arXiv preprint arXiv:2308.12966}, 2023.

\bibitem{berner2019dota}
C.~Berner, G.~Brockman, B.~Chan, V.~Cheung, P.~Dkebiak, C.~Dennison, D.~Farhi, Q.~Fischer, S.~Hashme, C.~Hesse, et~al.
\newblock Dota 2 with large scale deep reinforcement learning.
\newblock {\em arXiv preprint arXiv:1912.06680}, 2019.

\bibitem{chane2021goal}
E.~Chane-Sane, C.~Schmid, and I.~Laptev.
\newblock Goal-conditioned reinforcement learning with imagined subgoals.
\newblock In {\em International conference on machine learning}, pages 1430--1440. PMLR, 2021.

\bibitem{chen2024internvl}
Z.~Chen, J.~Wu, W.~Wang, W.~Su, G.~Chen, S.~Xing, M.~Zhong, Q.~Zhang, X.~Zhu, L.~Lu, et~al.
\newblock Internvl: Scaling up vision foundation models and aligning for generic visual-linguistic tasks.
\newblock In {\em Proceedings of the IEEE/CVF Conference on Computer Vision and Pattern Recognition}, pages 24185--24198, 2024.

\bibitem{babyai_iclr19}
M.~Chevalier-Boisvert, D.~Bahdanau, S.~Lahlou, L.~Willems, C.~Saharia, T.~H. Nguyen, and Y.~Bengio.
\newblock Baby{AI}: First steps towards grounded language learning with a human in the loop.
\newblock In {\em International Conference on Learning Representations}, 2019.

\bibitem{dayan1992feudal}
P.~Dayan and G.~E. Hinton.
\newblock Feudal reinforcement learning.
\newblock {\em Advances in neural information processing systems}, 5, 1992.

\bibitem{hong2023metagpt}
S.~Hong, X.~Zheng, J.~Chen, Y.~Cheng, J.~Wang, C.~Zhang, Z.~Wang, S.~K.~S. Yau, Z.~Lin, L.~Zhou, et~al.
\newblock Metagpt: Meta programming for multi-agent collaborative framework.
\newblock {\em arXiv preprint arXiv:2308.00352}, 2023.

\bibitem{hong2024cogagent}
W.~Hong, W.~Wang, Q.~Lv, J.~Xu, W.~Yu, J.~Ji, Y.~Wang, Z.~Wang, Y.~Dong, M.~Ding, et~al.
\newblock Cogagent: A visual language model for gui agents.
\newblock In {\em Proceedings of the IEEE/CVF Conference on Computer Vision and Pattern Recognition}, pages 14281--14290, 2024.

\bibitem{jiang2018open}
N.~Jiang and A.~Agarwal.
\newblock Open problem: The dependence of sample complexity lower bounds on planning horizon.
\newblock In {\em Conference On Learning Theory}, pages 3395--3398. PMLR, 2018.

\bibitem{jurgenson2020sub}
T.~Jurgenson, O.~Avner, E.~Groshev, and A.~Tamar.
\newblock Sub-goal trees a framework for goal-based reinforcement learning.
\newblock In {\em International conference on machine learning}, pages 5020--5030. PMLR, 2020.

\bibitem{levine2018reinforcement}
S.~Levine.
\newblock Reinforcement learning and control as probabilistic inference: Tutorial and review.
\newblock {\em arXiv preprint arXiv:1805.00909}, 2018.

\bibitem{liu2022goal}
M.~Liu, M.~Zhu, and W.~Zhang.
\newblock Goal-conditioned reinforcement learning: Problems and solutions.
\newblock {\em arXiv preprint arXiv:2201.08299}, 2022.

\bibitem{liu2024visualagentbench}
X.~Liu, T.~Zhang, Y.~Gu, I.~L. Iong, Y.~Xu, X.~Song, S.~Zhang, H.~Lai, X.~Liu, H.~Zhao, et~al.
\newblock Visualagentbench: Towards large multimodal models as visual foundation agents.
\newblock {\em arXiv preprint arXiv:2408.06327}, 2024.

\bibitem{liu2022constrained}
Z.~Liu, Z.~Cen, V.~Isenbaev, W.~Liu, S.~Wu, B.~Li, and D.~Zhao.
\newblock Constrained variational policy optimization for safe reinforcement learning.
\newblock In {\em International Conference on Machine Learning}, pages 13644--13668. PMLR, 2022.

\bibitem{nair2019hierarchical}
S.~Nair and C.~Finn.
\newblock Hierarchical foresight: Self-supervised learning of long-horizon tasks via visual subgoal generation.
\newblock {\em arXiv preprint arXiv:1909.05829}, 2019.

\bibitem{neumann2011variational}
G.~Neumann.
\newblock Variational inference for policy search in changing situations.
\newblock 2011.

\bibitem{openai2023gpt4v}
OpenAI.
\newblock Gpt-4v(ision) technical work and authors.
\newblock \url{https://openai.com/contributions/gpt-4v}, 2023.

\bibitem{pan2024vlp}
C.~Pan, B.~Yaman, T.~Nesti, A.~Mallik, A.~G. Allievi, S.~Velipasalar, and L.~Ren.
\newblock Vlp: Vision language planning for autonomous driving.
\newblock In {\em Proceedings of the IEEE/CVF Conference on Computer Vision and Pattern Recognition}, pages 14760--14769, 2024.

\bibitem{pan2024autonomous}
J.~Pan, Y.~Zhang, N.~Tomlin, Y.~Zhou, S.~Levine, and A.~Suhr.
\newblock Autonomous evaluation and refinement of digital agents.
\newblock In {\em First Conference on Language Modeling}, 2024.

\bibitem{parascandolo2020divide}
G.~Parascandolo, L.~Buesing, J.~Merel, L.~Hasenclever, J.~Aslanides, J.~B. Hamrick, N.~Heess, A.~Neitz, and T.~Weber.
\newblock Divide-and-conquer monte carlo tree search for goal-directed planning.
\newblock {\em arXiv preprint arXiv:2004.11410}, 2020.

\bibitem{peng2019advantage}
X.~B. Peng, A.~Kumar, G.~Zhang, and S.~Levine.
\newblock Advantage-weighted regression: Simple and scalable off-policy reinforcement learning.
\newblock {\em arXiv preprint arXiv:1910.00177}, 2019.

\bibitem{qi2024webrl}
Z.~Qi, X.~Liu, I.~L. Iong, H.~Lai, X.~Sun, X.~Yang, J.~Sun, Y.~Yang, S.~Yao, T.~Zhang, et~al.
\newblock Webrl: Training llm web agents via self-evolving online curriculum reinforcement learning.
\newblock {\em arXiv preprint arXiv:2411.02337}, 2024.

\bibitem{rawles2024androidinthewild}
C.~Rawles, A.~Li, D.~Rodriguez, O.~Riva, and T.~Lillicrap.
\newblock Androidinthewild: A large-scale dataset for android device control.
\newblock {\em Advances in Neural Information Processing Systems}, 36, 2024.

\bibitem{rocamonde2023vision}
J.~Rocamonde, V.~Montesinos, E.~Nava, E.~Perez, and D.~Lindner.
\newblock Vision-language models are zero-shot reward models for reinforcement learning.
\newblock {\em arXiv preprint arXiv:2310.12921}, 2023.

\bibitem{schulman2017proximal}
J.~Schulman, F.~Wolski, P.~Dhariwal, A.~Radford, and O.~Klimov.
\newblock Proximal policy optimization algorithms.
\newblock {\em arXiv preprint arXiv:1707.06347}, 2017.

\bibitem{shen2024hugginggpt}
Y.~Shen, K.~Song, X.~Tan, D.~Li, W.~Lu, and Y.~Zhuang.
\newblock Hugginggpt: Solving ai tasks with chatgpt and its friends in hugging face.
\newblock {\em Advances in Neural Information Processing Systems}, 36, 2024.

\bibitem{silver2017mastering}
D.~Silver, J.~Schrittwieser, K.~Simonyan, I.~Antonoglou, A.~Huang, A.~Guez, T.~Hubert, L.~Baker, M.~Lai, A.~Bolton, et~al.
\newblock Mastering the game of go without human knowledge.
\newblock {\em nature}, 550(7676):354--359, 2017.

\bibitem{sutton1999between}
R.~S. Sutton, D.~Precup, and S.~Singh.
\newblock Between mdps and semi-mdps: A framework for temporal abstraction in reinforcement learning.
\newblock {\em Artificial intelligence}, 112(1-2):181--211, 1999.

\bibitem{team2024gemini}
G.~Team, P.~Georgiev, V.~I. Lei, R.~Burnell, L.~Bai, A.~Gulati, G.~Tanzer, D.~Vincent, Z.~Pan, S.~Wang, et~al.
\newblock Gemini 1.5: Unlocking multimodal understanding across millions of tokens of context.
\newblock {\em arXiv preprint arXiv:2403.05530}, 2024.

\bibitem{toyama2021androidenv}
D.~Toyama, P.~Hamel, A.~Gergely, G.~Comanici, A.~Glaese, Z.~Ahmed, T.~Jackson, S.~Mourad, and D.~Precup.
\newblock Androidenv: A reinforcement learning platform for android.
\newblock {\em arXiv preprint arXiv:2105.13231}, 2021.

\bibitem{wang2024distrl}
T.~Wang, Z.~Wu, J.~Liu, J.~Hao, J.~Wang, and K.~Shao.
\newblock Distrl: An asynchronous distributed reinforcement learning framework for on-device control agents.
\newblock {\em arXiv preprint arXiv:2410.14803}, 2024.

\bibitem{wu2024variational}
Q.~Wu, S.~S. Zhan, Y.~Wang, Y.~Wang, C.-W. Lin, C.~Lv, Q.~Zhu, and C.~Huang.
\newblock Variational delayed policy optimization.
\newblock {\em arXiv preprint arXiv:2405.14226}, 2024.

\bibitem{yang2023auto}
H.~Yang, S.~Yue, and Y.~He.
\newblock Auto-gpt for online decision making: Benchmarks and additional opinions.
\newblock {\em arXiv preprint arXiv:2306.02224}, 2023.

\bibitem{yang2023set}
J.~Yang, H.~Zhang, F.~Li, X.~Zou, C.~Li, and J.~Gao.
\newblock Set-of-mark prompting unleashes extraordinary visual grounding in gpt-4v.
\newblock {\em arXiv preprint arXiv:2310.11441}, 2023.

\bibitem{yang2024guiding}
Z.~Yang, C.~Garrett, D.~Fox, T.~Lozano-P{\'e}rez, and L.~P. Kaelbling.
\newblock Guiding long-horizon task and motion planning with vision language models.
\newblock {\em arXiv preprint arXiv:2410.02193}, 2024.

\bibitem{yang2023appagent}
Z.~Yang, J.~Liu, Y.~Han, X.~Chen, Z.~Huang, B.~Fu, and G.~Yu.
\newblock Appagent: Multimodal agents as smartphone users.
\newblock {\em arXiv preprint arXiv:2312.13771}, 2023.

\bibitem{zhang2023appagent}
C.~Zhang, Z.~Yang, J.~Liu, Y.~Han, X.~Chen, Z.~Huang, B.~Fu, and G.~Yu.
\newblock Appagent: Multimodal agents as smartphone users.
\newblock {\em arXiv preprint arXiv:2312.13771}, 2023.

\bibitem{zhang2023you}
Z.~Zhang and A.~Zhang.
\newblock You only look at screens: Multimodal chain-of-action agents.
\newblock {\em arXiv preprint arXiv:2309.11436}, 2023.

\bibitem{zhao2024epo}
Q.~Zhao, H.~Fu, C.~Sun, and G.~Konidaris.
\newblock Epo: Hierarchical llm agents with environment preference optimization.
\newblock {\em arXiv preprint arXiv:2408.16090}, 2024.

\bibitem{zheng2024gpt}
B.~Zheng, B.~Gou, J.~Kil, H.~Sun, and Y.~Su.
\newblock Gpt-4v (ision) is a generalist web agent, if grounded.
\newblock {\em arXiv preprint arXiv:2401.01614}, 2024.

\bibitem{zhou2023webarena}
S.~Zhou, F.~F. Xu, H.~Zhu, X.~Zhou, R.~Lo, A.~Sridhar, X.~Cheng, T.~Ou, Y.~Bisk, D.~Fried, et~al.
\newblock Webarena: A realistic web environment for building autonomous agents.
\newblock {\em arXiv preprint arXiv:2307.13854}, 2023.

\bibitem{zhouarcher}
Y.~Zhou, A.~Zanette, J.~Pan, S.~Levine, and A.~Kumar.
\newblock Archer: Training language model agents via hierarchical multi-turn rl.
\newblock In {\em Forty-first International Conference on Machine Learning}.

\end{thebibliography}


\appendix

\clearpage
\section{Implementation Details}
\label{appendix:implementation_detail}
As shown in \Cref{appendix:table:vlm}, we summarize LLMs and VLMs used in VSC-RL.
For AitW tasks\cite{rawles2024androidinthewild}, we built our VSC-RL on the open repository of DigiRL~\cite{bai2024digirl}. 
Hyperparameter settings are listed in \Cref{appendix:table:parameter_aitw}, and each run of VSC-RL takes approximately 24 hours on 1 NVIDIA GeForce RTX 4090 GPU and 8 Intel Xeon CPUs.
For WebArena-Lite tasks~\cite{liu2024visualagentbench}, we built our VSC-RL on the open repository of WebRL~\cite{qi2024webrl} and follow the online RL loop of interaction, filtering, and update. 
To ensure a fair comparison, we remove WebRL’s curriculum learning component, so that all methods are trained solely on the same task set. 
For VSC-RL and other baselines, we apply the same actor perplexity-based filtering strategy as WebRL to select replay data, ensuring consistency in experience quality.
Hyperparameter settings are listed in \Cref{appendix:table:parameter_webarena}, and each run of VSC-RL takes approximately 24 hours on 8 NVIDIA GeForce RTX 4090 GPUs and 8 Intel Xeon CPUs.
Specifically, we report some results on AitW (Set-of-Marks, AppAgent, CogAgent, AutoUI, and Filtered BC) and WebArena-Lite (SFT, Filtered BC, and AWR) from the literature~\cite{bai2024digirl, qi2024webrl}.

\begin{table*}[h]
\centering
\caption{The summary of LLMs and VLMs used in VSC-RL.}
\label{appendix:table:vlm}
\scalebox{0.8}{
\begin{tabular}{llll}
\hline
\multicolumn{1}{c}{\multirow{2}{*}{Component}} & \multicolumn{2}{c}{Task}                                     & \multicolumn{1}{c}{\multirow{2}{*}{Description}}       \\
\multicolumn{1}{c}{}                           & \multicolumn{1}{c}{AitW} & \multicolumn{1}{c}{WebArena-Lite} & \multicolumn{1}{c}{}                                   \\ \hline
Subgoal Generator                              & Gemini-1.5-Pro                     &    Gemini-1.5-Pro                     & Autonomously decompose the goal into subgoals.         \\
Reference Actor $\pi_\text{ref}$               & AutoUI-Base                        &    Llama-3.1-8B                     & Provide reference action for imitation.                \\
Target Actor $\pi$                             & AutoUI-Base                        &   Llama-3.1-8B                      & Make decisions to maximize subgoal-conditioned return. \\
Evaluator                                      & Gemini-1.5-Pro                     &  Llama-3.1-8B                       & Autonomously evaluate the goal's or subgoal's success. \\ \hline
\end{tabular}
}
\end{table*}

\begin{table}[ht]
\centering
\caption{Hyperparameters settings of VSC-RL on AitW tasks. \label{appendix:table:parameter_aitw}}
\begin{tabular}{|cc|}
\hline
Hyperparameter                    & Value          \\ \hline
Batch Size                         & 4              \\
Total Trajectories                 & 1,000          \\
Discount Factor                    & 0.5            \\
Learning Rate                      & 1e-4           \\
Update Epoch (Actor \Cref{eq:rl_pi}) & 20             \\
Update Epoch (Critic \Cref{eq:rl_value})& 5              \\
Update Epoch (Actor \Cref{eq:il_ref}) & 20             \\
Update Epoch (Instruction-level Critic)& 5              \\
Maximum Gradient Norm               & 0.01             \\ \hline
\end{tabular}
\end{table}

\begin{table}[ht]
\centering
\caption{Hyperparameter settings of VSC-RL on WebArena-Lite tasks. \label{appendix:table:parameter_webarena}}
\begin{tabular}{|cc|}
\hline
Hyperparameter                    & Value          \\ \hline
Batch Size                         & 128              \\
Total Trajectories                 & 1,000          \\
Discount Factor                    & 0.9            \\
Learning Rate                      & 1e-6           \\
Update Epoch (Actor \Cref{eq:rl_pi}) & 1             \\
Update Epoch (Critic \Cref{eq:rl_value})& 1              \\
Update Epoch (Actor \Cref{eq:il_ref}) & 1             \\
Maximum Gradient Norm               & 1.0             \\ \hline
\end{tabular}
\end{table}


\clearpage
\section{Theoretical Analysis}
\begin{proposition}[Subgoal-Conditioned Optimization Objective\label{appendix:soo}]
    Given a goal $g$ with corresponding subgoals $\{sg_i\}_{i=1}^N$ and a subgoal-conditioned target policy $\pi$, the objective of
    $$
    \max_{\pi}\mathop{\mathbb{E}}_{\tau \sim p_{\pi}(\tau| g)}\left[\log p(O|\tau, g)\right]
    $$
    is equivalent to the objective of
    $$
    \max_{\pi}\sum_{i=1}^N\left[\mathop{\mathbb{E}}_{\tau_{i} \sim p_{\pi}(\tau_{i}| sg_{i})}\left[\log p(O|\tau_{i}, sg_{i})\right]\right].
    $$
\end{proposition}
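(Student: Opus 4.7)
The plan is to prove the proposition by decomposing the log-probability of trajectory optimality along the sub-trajectory partition guaranteed by \Cref{assumption:subgoal}, then reducing each summand via linearity of expectation and marginalization. The essential ingredients are the variational identification $\log p(O|\tau, g) = \mathcal{J}(\tau, g)/\alpha + c$ introduced in \Cref{sec:preliminaries}, the additivity of returns across the subgoal partition, and the Markov structure of the subgoal-conditioned policy $\pi$.

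First, I would apply \Cref{assumption:subgoal} to write $\tau = (\tau_1, \ldots, \tau_N)$ with corresponding subgoals $(sg_1, \ldots, sg_N)$, so that each $\tau_i$ is the sub-trajectory culminating in $sg_i$ and concatenation reconstructs $\tau$. Under this partition the cumulative return splits as $\mathcal{J}(\tau, g) = \sum_{i=1}^{N} \mathcal{J}(\tau_i, sg_i)$, since every reward step belongs to exactly one sub-trajectory and the binary terminal reward for $g$ is realised precisely when all subgoals $sg_i$ are accomplished. Substituting into $\log p(O|\tau, g) = \mathcal{J}(\tau, g)/\alpha + c$ and invoking linearity of expectation turns the left-hand objective into $\sum_{i=1}^{N} \mathbb{E}_{\tau \sim p_\pi(\tau|g)}[\log p(O|\tau_i, sg_i)]$, up to an additive constant independent of $\pi$ and therefore irrelevant to the $\arg\max$.

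Second, I would reduce each summand to its marginal form. Because $\log p(O|\tau_i, sg_i)$ depends only on the pair $(\tau_i, sg_i)$, integrating the joint $p_\pi(\tau|g)$ over the remaining sub-trajectories produces the marginal law of $\tau_i$. Here the subgoal-conditioned Markov structure is essential: on segment $i$ the policy is conditioned on $sg_i$, so the resulting marginal coincides with $p_\pi(\tau_i|sg_i)$. Combining the two reductions yields the claimed equivalent objective $\sum_{i=1}^{N} \mathbb{E}_{\tau_i \sim p_\pi(\tau_i|sg_i)}[\log p(O|\tau_i, sg_i)]$, establishing equivalence of the two optimization problems in $\pi$.

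The main obstacle I anticipate is making the marginalization step rigorous, since the initial state of each $\tau_i$ depends on the terminal state of $\tau_{i-1}$ and the shared policy $\pi$ couples the segments. I plan to handle this by expanding $p_\pi(\tau|g)$ as a product of the transition kernel and $\pi(\cdot|s, sg_i)$ over time steps, so that marginalising the earlier segments leaves a well-defined distribution over the starting state of $\tau_i$; because only $\arg\max_\pi$ is sought, any $\pi$-independent factors produced by this marginalisation drop out. A secondary subtlety worth spelling out is the identification of the per-subgoal sparse reward with the original sparse goal reward, which is needed so that the decomposition $\mathcal{J}(\tau,g) = \sum_{i} \mathcal{J}(\tau_i, sg_i)$ preserves both the optimum and the performance guarantee advertised in \Cref{sec:intro}.
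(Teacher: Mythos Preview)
Your proposal is correct and follows essentially the same route as the paper: both arguments use the variational identification $\log p(O|\tau,g)\propto \mathcal{J}(\tau|g)/\alpha$, decompose the return additively as $\mathcal{J}(\tau|g)=\sum_i \mathcal{J}(\tau_i|sg_i)$, and then pass from the full-trajectory expectation to per-segment expectations. The only cosmetic difference is that the paper simply asserts the product factorisation $p_\pi(\tau|g)=\prod_{i=1}^N p_\pi(\tau_i|sg_i)$ and applies linearity directly, whereas you frame the same step as marginalisation and (rightly) flag the segment-coupling issue; the paper's proof does not address that subtlety at all, so your treatment is, if anything, more careful than what appears in the appendix.
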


\begin{proof}
We have 
$$
\log p(O|\tau, g) \propto \exp\left(\frac{\mathcal{J}(\tau|g)}{\alpha}\right)=\exp\left(\frac{\mathop{\sum}_{i=1}^N\left[\mathcal{J}(\tau_i|sg_i)\right]}{\alpha}\right).
$$

So, we have
$$
\begin{aligned}
    &\mathop{\mathbb{E}}_{\tau \sim p_{\pi}(\tau| g)}\left[\log p(O|\tau, g)\right]\\
    & = \mathop{\mathbb{E}}_{\tau \sim p_{\pi}(\tau, g)}\left[\mathcal{J}(\tau, g)\right]\\
    & = \mathop{\mathbb{E}}_{\tau \sim \Pi_{i=1}^N p_\pi(\tau_i, sg_i)}\left[\mathcal{J}(\tau| g)\right]\\
    & = \mathop{\mathbb{E}}_{\tau \sim \mathop{\Pi}_{i=1}^N p_\pi(\tau_i, sg_i)}\left[\sum_{i=1}^N\mathcal{J}(\tau_i| sg_i)\right]\\
    & = \sum_{i=1}^N\left[\mathop{\mathbb{E}}_{\tau_i \sim p_\pi(\tau_i, sg_i)}\left[\mathcal{J}(\tau_i| sg_i)\right]\right]\\
\end{aligned}
$$

Due to the fact that 
$$
\log p(O|\tau_i, sg_i) \propto \exp\left(\frac{\mathcal{J}(\tau_i|sg_i)}{\alpha}\right).
$$

Therefore, we have
$$
\max_{\pi}\mathop{\mathbb{E}}_{\tau \sim p_{\pi}(\tau| g)}\left[\log p(O|\tau, g)\right]
\Rightarrow
\max_{\pi}\sum_{i=1}^N\left[\mathop{\mathbb{E}}_{\tau_{i} \sim p_{\pi}(\tau_{i}| g_{i})}\left[\log p(O|\tau_{i}, sg_{i})\right]\right]
$$

\end{proof}

\begin{proposition}[Subgoal-conditioned Difference Bound\label{appendix:sdb}]
Given goal-conditioned reference policy $\pi_\text{ref}$ and subgoal-conditioned target policy $\pi$, the goal-conditioned KL divergence of a given goal $g$ has the upper bound of subgoal-conditioned KL divergence of corresponding subgoals $\{sg_i\}_{i=1}^N$ as follows:
$$
    \text{KL}(p_{\pi}(\tau| g)||p_{\pi_\text{ref}}(\tau| g)) 
    \leq
    \sum_{i=1}^N\left[\text{KL}(p_{\pi}(\tau_i| sg_i)||p_{\pi_\text{ref}}(\tau_i| g))\right].
$$
\end{proposition}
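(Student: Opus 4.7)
The plan is to invoke Assumption 4.1 to decompose $\tau$ into sub-trajectories $(\tau_1,\ldots,\tau_N)$ paired with subgoals $(sg_1,\ldots,sg_N)$, and then split the joint-trajectory KL into a sum of per-sub-trajectory KL terms using a chain-rule / factorisation argument in the same spirit as the derivation of \Cref{proposition:soo}. Concretely, I would first observe that because the target policy $\pi$ is subgoal-conditioned and each $\tau_i$ is governed by its own $sg_i$, the target trajectory distribution factorises as $p_{\pi}(\tau|g) = \prod_{i=1}^N p_{\pi}(\tau_i|sg_i)$. This is the analogue of the factorisation used in \Cref{appendix:soo} to rewrite the expected log-evidence as a sum of subgoal-wise terms.

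Next, I would apply the chain rule of KL divergence. Writing $\tau = (\tau_1,\ldots,\tau_N)$ and expanding both distributions along this decomposition, the chain rule gives
$$\text{KL}(p_{\pi}(\tau|g)\,\|\,p_{\pi_\text{ref}}(\tau|g)) = \sum_{i=1}^N \mathop{\mathbb{E}}_{\tau_{<i}\sim p_\pi(\cdot|g)}\bigl[\text{KL}(p_{\pi}(\tau_i|\tau_{<i},g)\,\|\,p_{\pi_\text{ref}}(\tau_i|\tau_{<i},g))\bigr].$$
The target factor simplifies to $p_{\pi}(\tau_i|sg_i)$ by the subgoal-Markov structure of step one. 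To pass from a reference conditional $p_{\pi_\text{ref}}(\tau_i|\tau_{<i},g)$ to the subgoal-indexed reference $p_{\pi_\text{ref}}(\tau_i|g)$ that appears on the right-hand side, I would invoke the natural factorisation of the goal-conditioned reference induced by the subgoal decomposition, namely $p_{\pi_\text{ref}}(\tau|g) = \prod_{i=1}^N p_{\pi_\text{ref}}(\tau_i|g)$. Plugging both factorisations into the definition of KL then collapses the double expectation to a sum of marginal KL terms, delivering the claimed bound (in fact with equality under this reference factorisation, which justifies the stated inequality in full generality).

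The main obstacle is precisely this last step: the chain rule applied to an arbitrary reference $\pi_\text{ref}$ yields conditionals $p_{\pi_\text{ref}}(\tau_i|\tau_{<i},g)$ rather than $p_{\pi_\text{ref}}(\tau_i|g)$, and the inequality $\mathbb{E}_{\tau_{<i}}[\text{KL}(p\,\|\,q_{\tau_{<i}})] \leq \text{KL}(p\,\|\,\mathbb{E}_{\tau_{<i}} q_{\tau_{<i}})$ is the \emph{opposite} direction of what the convexity of KL in its second argument provides. The cleanest resolution is the product-form assumption on $\pi_\text{ref}$ described above, consistent with the variational setup; an alternative is to invoke a data-processing / monotonicity argument that collapses the information carried by $\tau_{<i}$ into the subgoal $sg_i$. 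Either way, the remainder of the proof is a routine algebraic rearrangement once the factorisations of both $p_\pi$ and $p_{\pi_\text{ref}}$ are in hand, so the bulk of the work lies in justifying this single decomposition step.
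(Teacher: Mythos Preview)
Your proposal is correct and matches the paper's argument: both rely on the product factorisations $p_\pi(\tau\mid g)=\prod_i p_\pi(\tau_i\mid sg_i)$ and $p_{\pi_\text{ref}}(\tau\mid g)=\prod_i p_{\pi_\text{ref}}(\tau_i\mid g)$, from which the KL collapses into the sum of subgoal-wise terms. The paper simply expands the log-ratio directly rather than invoking the chain rule, and its lone inequality step (replacing $\mathbb{E}_{\tau\sim p_\pi(\cdot|g)}$ by $\mathbb{E}_{\tau_i\sim p_\pi(\cdot|sg_i)}$) is in fact an equality under the assumed product form, exactly as you observe.
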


\begin{proof}

We have
$$
\begin{aligned}
p_\pi(\tau| g) &= \rho(s_0) \prod_{t=0}^{H} P(s_{t+1}|s_{t}, a_{t}) \pi(a_t|s_t, g), \\
&=\mathop{\Pi}_{i=1}^N p_\pi(\tau_i| sg_i).\\
&\leq p_\pi(\tau_i| sg_i) (i=1, \cdots, N)\\
\end{aligned}
$$
Similarly, we have
$$
\begin{aligned}
p_{\pi_\text{ref}}(\tau| g) = \mathop{\Pi}_{i=1}^N p_{\pi_\text{ref}}(\tau_i| g)&\\
\end{aligned}
$$

Therefore,
$$
\begin{aligned}
&\text{KL}(p_{\pi}(\tau| g)||p_{\pi_\text{ref}}(\tau| g)) \\
&= \mathop{\mathbb{E}}_{\tau \sim p_\pi(\tau| g)}\left[\log p_\pi(\tau| g) - \log p_{\pi_\text{ref}}(\tau| g) \right]\\
&= \mathop{\mathbb{E}}_{\tau \sim p_\pi(\tau| g)}\left[\sum_{i=1}^N \log p_\pi(\tau_i| sg_i) - \sum_{i=1}^N \log p_{\pi_\text{ref}}(\tau_i| g) \right]\\
&= \sum_{i=1}^N \left[\mathop{\mathbb{E}}_{\tau \sim p_\pi(\tau| g)}\left[\log p_\pi(\tau_i| sg_i) - \log p_{\pi_\text{ref}}(\tau_i| g) \right]\right]\\
&\leq \sum_{i=1}^N \left[\mathop{\mathbb{E}}_{\tau_i \sim p_\pi(\tau_i| sg_i)}\left[\log p_\pi(\tau_i| sg_i) - \log p_{\pi_\text{ref}}(\tau_i| g) \right]\right]\\
&=\sum_{i=1}^N\left[\text{KL}(p_{\pi}(\tau_i| sg_i)||p_{\pi_\text{ref}}(\tau_i| g))\right]\\
\end{aligned}
$$    

\end{proof}

\clearpage
\section{Additional Experiments: MiniGrid}
\label{appendix:additional_exp_minigrid}
We also evaluate our VSC-RL on the toy vision-language decision-making tasks, MiniGrid~\cite{babyai_iclr19}.
We select the PPO~\cite{schulman2017proximal} as the baseline, and we apply VSC-RL in the PPO for a fair comparison.
We built our VSC-RL on the open repository of babyAI~\cite{babyai_iclr19}, hyperparameter settings are listed in \Cref{appendix:table:parameter_minigrid}. 
Overall, as shown in \Cref{fig:multiroom}, our VSC-RL outperforms the baseline in all tasks remarkably, especially in the difficult task with the increasing number of rooms.
From the result of MultiRoom-N2-v0 shown in \Cref{fig:multiroom_2}, we can tell that although PPO and VSC-RL both successfully reach $100\%$ success rate, our VSC-RL shows a better sample efficiency.
For MultiRoom-N4-v0 (\Cref{fig:multiroom_4}) and MultiRoom-N6-v0 (\Cref{fig:multiroom_6}) where PPO is not able to learn any useful policy, while VSC-RL exhibits strong performance of $100\%$ and $80\%$ success rate, respectively.

\begin{table}[ht]
\centering
\caption{Hyperparameter settings of VSC-RL on MiniGrid. \label{appendix:table:parameter_minigrid}}
\begin{tabular}{|cc|}
\hline
Hyperparameter                    & Value          \\ \hline
Batch Size                         & 256            \\
Total Steps                        & 200,000        \\
Discount Factor                    & 0.99           \\
Learning Rate                      & 1e-3           \\
Network Layers (Image)             & 3              \\
Network Layers (Text)              & 1              \\
Network Layers (Actor)             & 2              \\
Network Layers (Critic)            & 2              \\
Update Epoch (Actor \Cref{eq:rl_pi}) & 4             \\
Update Epoch (Critic \Cref{eq:rl_value})& 4              \\
Update Epoch (Actor \Cref{eq:il_ref}) & 4             \\
Activation                         & ReLU           \\
Optimizer                          & Adam           \\ \hline
\end{tabular}
\end{table}

\begin{figure*}[h]
\centering
\centerline{
    \subfigure[MultiRoom-N2-v0\label{fig:multiroom_2}]{\includegraphics[width=0.33\linewidth]{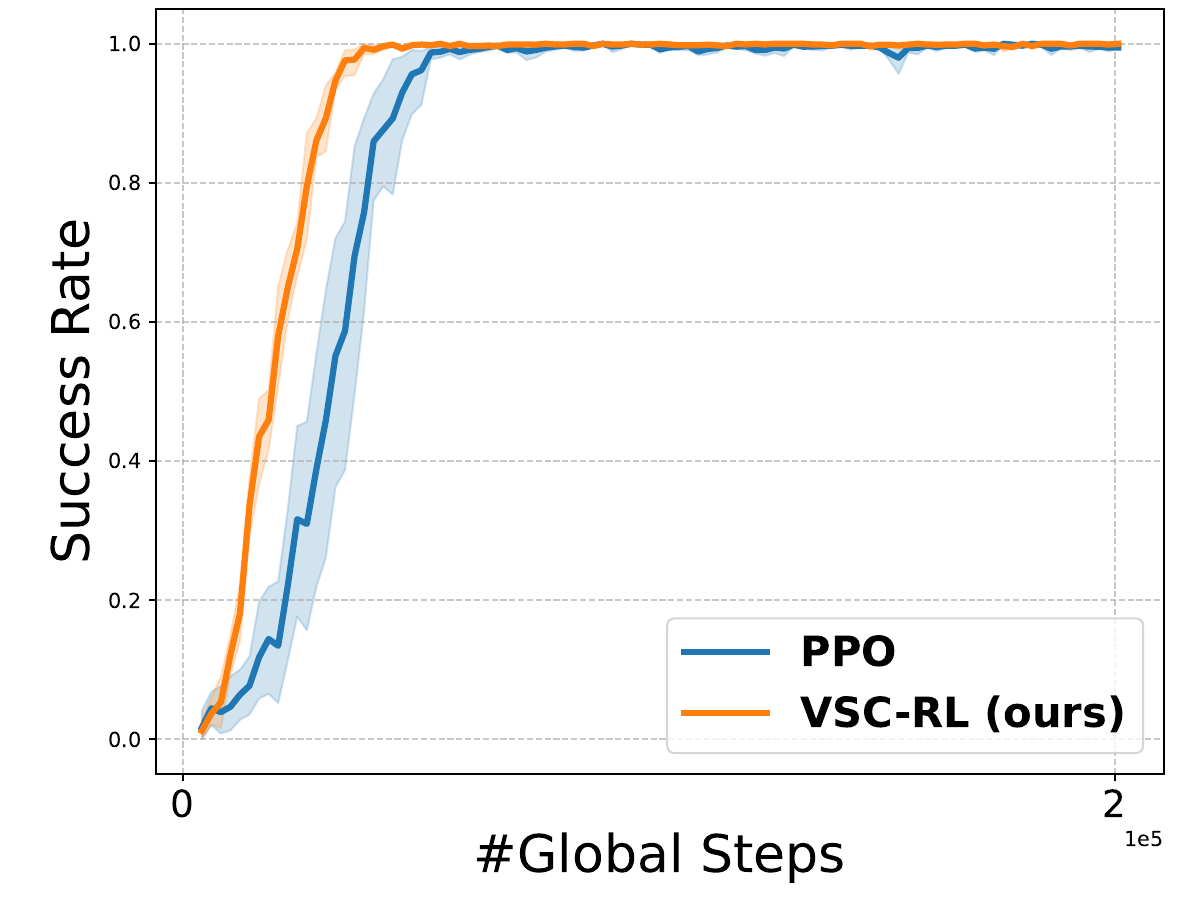}}
    \subfigure[MultiRoom-N4-v0\label{fig:multiroom_4}]{\includegraphics[width=0.33\linewidth]{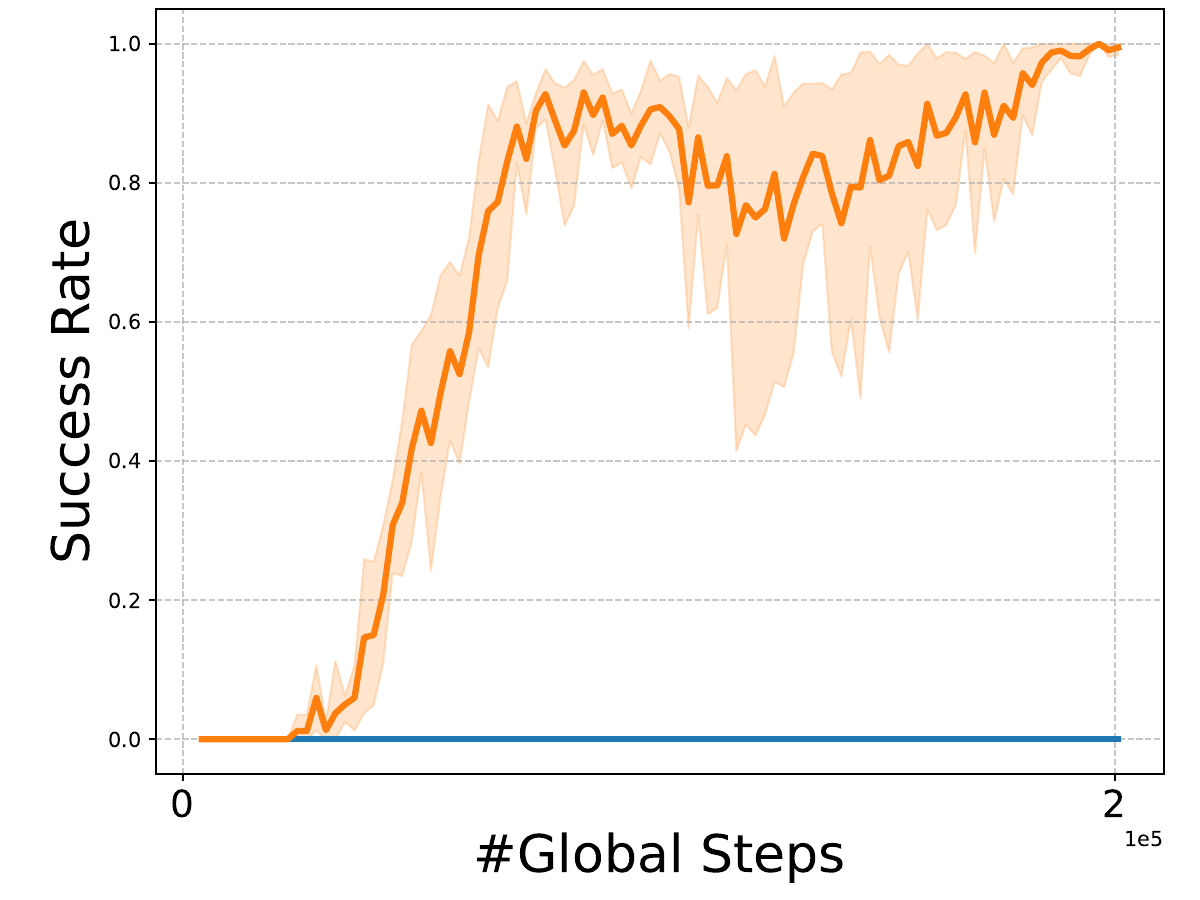}}
    \subfigure[MultiRoom-N6-v0\label{fig:multiroom_6}]{\includegraphics[width=0.33\linewidth]{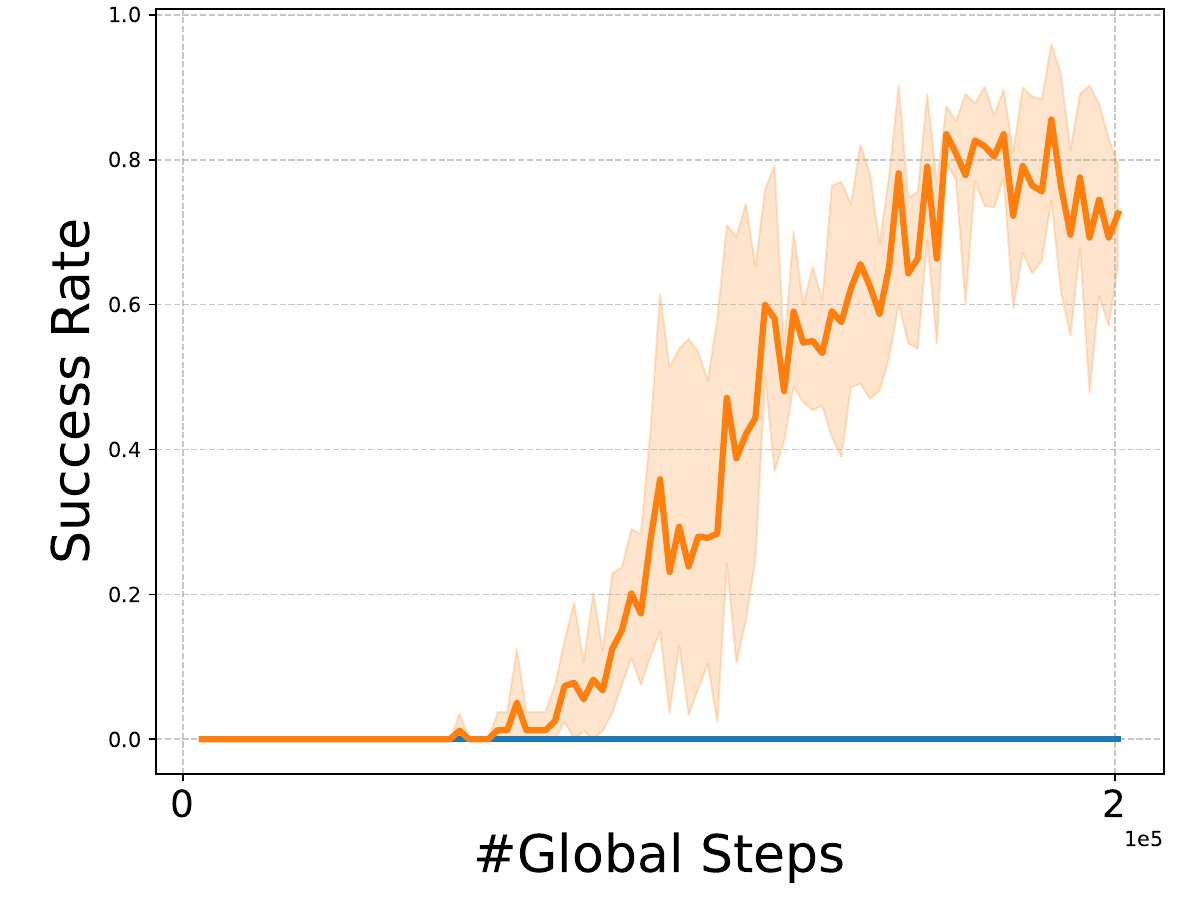}}
}
\caption{Learning curves on MultiRoom tasks of (a) 2 rooms, (b) 4 rooms, and (c) 6 rooms. \label{fig:multiroom}}
\end{figure*}

\clearpage
\section{Additional Experiments: Subgoal Generator in VSC-RL}
\label{appendix:additional_exp_subgoal_generator}
\paragraph{Improvement from Subgoal Generator.}
We investigate the importance of the subgoal generator in our VSC-RL on the Web Shopping subsets with different horizon lengths (short, medium and long).
We implement VSC-RL with the original goal instead of the subgoals generated from VLM.
As shown in \Cref{fig:exp_ablation_subgoal}, the subgoal generator can effectively improve the performance across all types of Web Shopping tasks via autonomously decomposing the original goal into subgoals.
Especially, the subgoal generator can effectively enhance the $50\%$ and $32\%$ performance in the Web Shopping medium and long tasks, respectively.

\begin{figure*}[h]
\centering
\centerline{
    \subfigure[Web Shopping (Short)]{\includegraphics[width=0.33\linewidth]{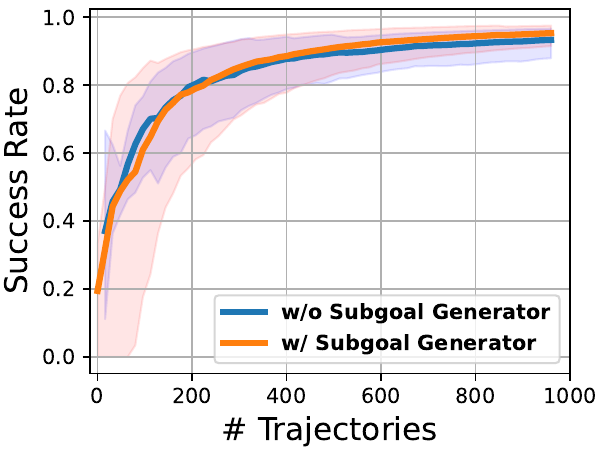}}
    \subfigure[Web Shopping (Medium)]{\includegraphics[width=0.33\linewidth]{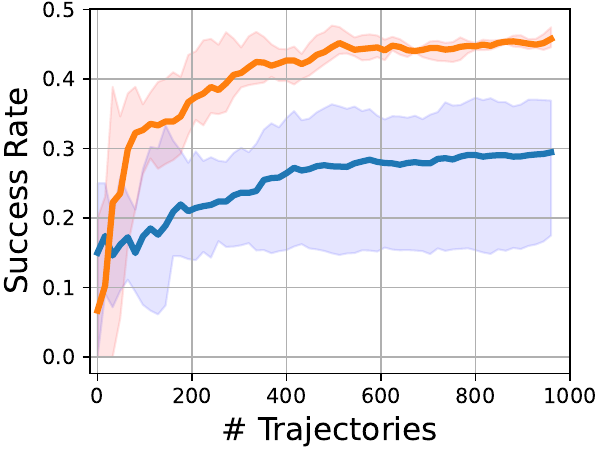}}
    \subfigure[Web Shopping (Long)]{\includegraphics[width=0.33\linewidth]{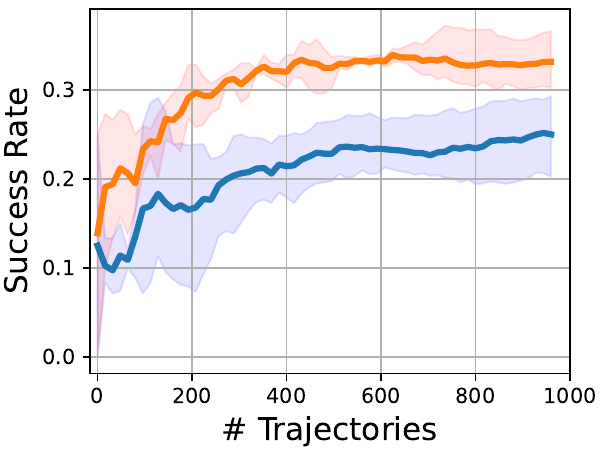}}
}
\caption{Success rate on Web Shopping (a) Short, (b) Medium and (c) Long tasks of VSC-RL with and without subgoal generator.\label{fig:exp_ablation_subgoal}}
\end{figure*}

\paragraph{Verification of Subgoal Generator.}
To investigate the quality and feasibility of the generated subgoals, we manually verify the results of the subgoal generator on $135$ trajectories from the AitW human demonstration.
There are $135 (100\%)$ goals that are decomposed into feasible subgoals successfully, and the final goal can be accomplished by reaching these subgoals sequentially.
Specifically, there are $123 (91.1\%)$ goals that are decomposed into subgoals completely aligning with the human demonstration.
For the remaining $12 (8.9\%)$ goals, the subgoal generator provides alternative subgoals different from human demonstration, but still can successfully arrive at the final goal.

\clearpage
\section{Additional Experiments: WebArena-Lite}
\label{appendix:additional_exp_web}  
As shown in \Cref{table:web_evaluation}, we present the evaluation performance on specific WebArena-Lite tasks, including Reddit, Gitlab, CMS (online store content management system), Map (OpenStreetMap), and OSS (OneStopShop).
We also provide the learning curves in \Cref{fig:exp_web_arena_tasks}.
These empirical results demonstrate that our VSC-RL achieve the best performance across on all types of tasks, significantly surpassing existing SOTAs.

\begin{table*}[h]
\centering
\caption{The evaluated performance on WebArena-Lite. The best performance is \textbf{\underline{highlighted}}.\label{table:web_evaluation}}
\scalebox{0.8}{
\begin{tabular}{l|cccccc}
\hline
\multirow{2}{*}{Method} & \multicolumn{6}{c}{Task (\# Ratio)}                                                                       \\
                        & Reddit (12.7\%) & Gitlab (19.4\%) & CMS (21.2\%)    & Map (18.8\%)    & OSS (27.9\%)    & All (100.0\%)    \\ \hline
SFT                     & 36.8\%          & 6.7\%           & 20.0\%          & 33.3\%          & 17.8\%          & 20.6\%          \\
Filtered BC             & 52.6\%          & 20.0\%          & 31.4\%          & 23.3\%          & 8.9\%           & 23.0\%          \\
AWR                     & 57.9\%          & 26.7\%          & 31.4\%          & 26.7\%          & 17.8\%          & 28.5\%          \\
DigiRL                  & 52.4\%          & 28.1\%          & 37.1\%          & 32.3\%          & 15.2\%          & 30.3\%          \\
WebRL                   & 57.1\%          & 28.1\%          & 34.3\%          & \textbf{\underline{35.5\%}} & 15.2\%          & 30.9\%          \\
VSC-RL (ours)           & \textbf{\underline{61.9\%}} & \textbf{\underline{31.3\%}} & \textbf{\underline{40.0\%}} & \textbf{\underline{35.5\%}} & \textbf{\underline{19.6\%}} & \textbf{\underline{34.5\%}} \\ \hline
\end{tabular}
}
\end{table*}

\begin{figure*}[h]
\centering
\centerline{
    \subfigure[Reddit]{\includegraphics[width=0.33\linewidth]{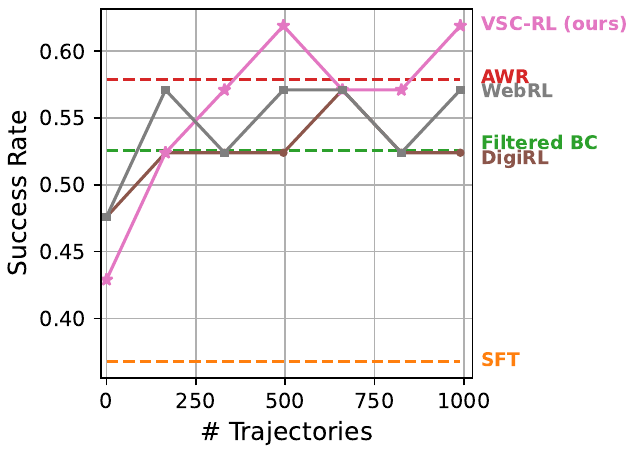}}
    \subfigure[Gitlab]{\includegraphics[width=0.33\linewidth]{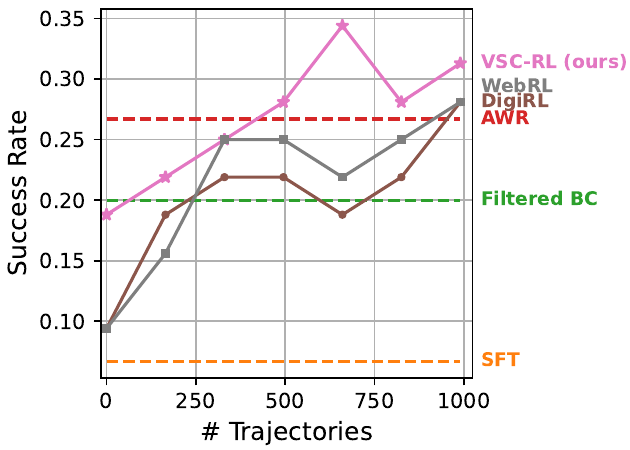}}
    \subfigure[CMS]{\includegraphics[width=0.33\linewidth]{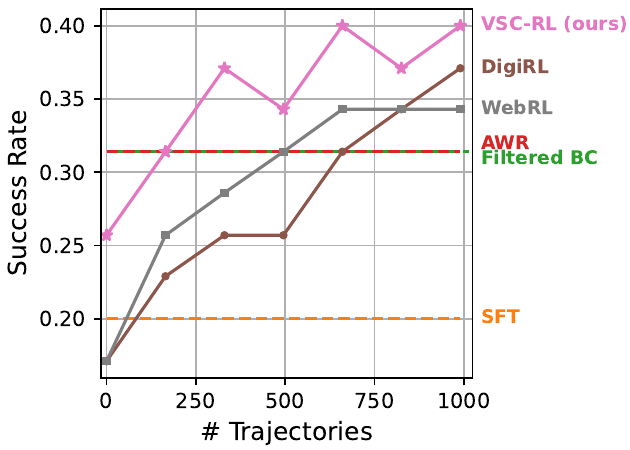}}
}
\centerline{
    \subfigure[Map]{\includegraphics[width=0.33\linewidth]{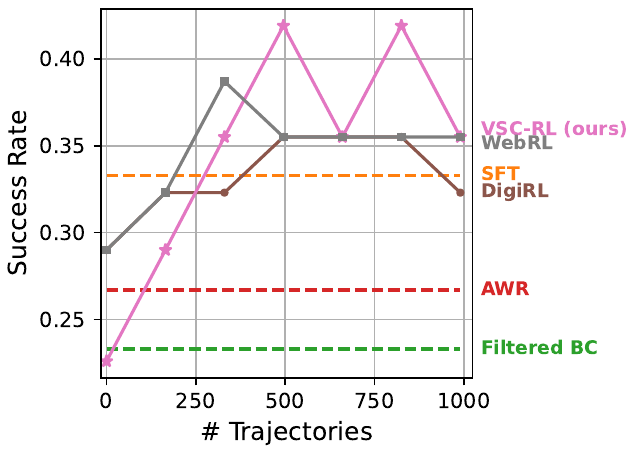}}
    \subfigure[OSS]{\includegraphics[width=0.33\linewidth]{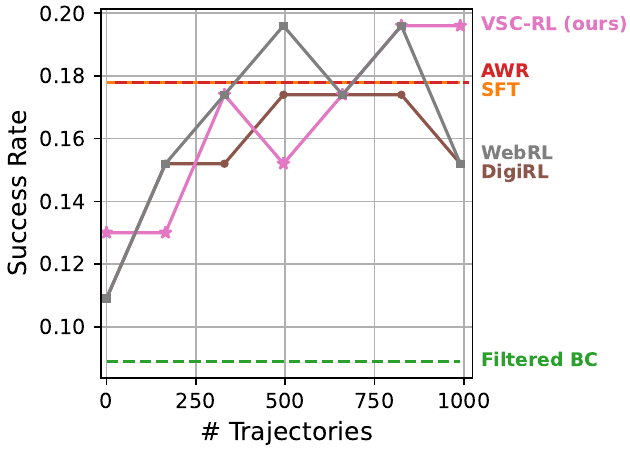}}
    \subfigure[All]{\includegraphics[width=0.33\linewidth]{figs/experiments/webarena.pdf}}
}
\caption{Learning curves on (a) Reddit, (b) Gitlab, (c) CMS, (d) Map, (e) OSS, and (f) All tasks.\label{fig:exp_web_arena_tasks}}
\end{figure*}

\clearpage
\section{Prompt Example}
\label{appendix:prompt_example}
We provide the prompt example of the subgoal generator in our VSC-RL for a given goal and corresponding decomposed subgoals in the MultiRoom (\Cref{fig:prompting_example_multirooms}), AitW (\Cref{fig:prompting_example_aitw}) and WebArena-Lite (\Cref{fig:prompting_example_webarena}) tasks.

\begin{figure*}[h]
\centering
\includegraphics[width=1.0\textwidth]{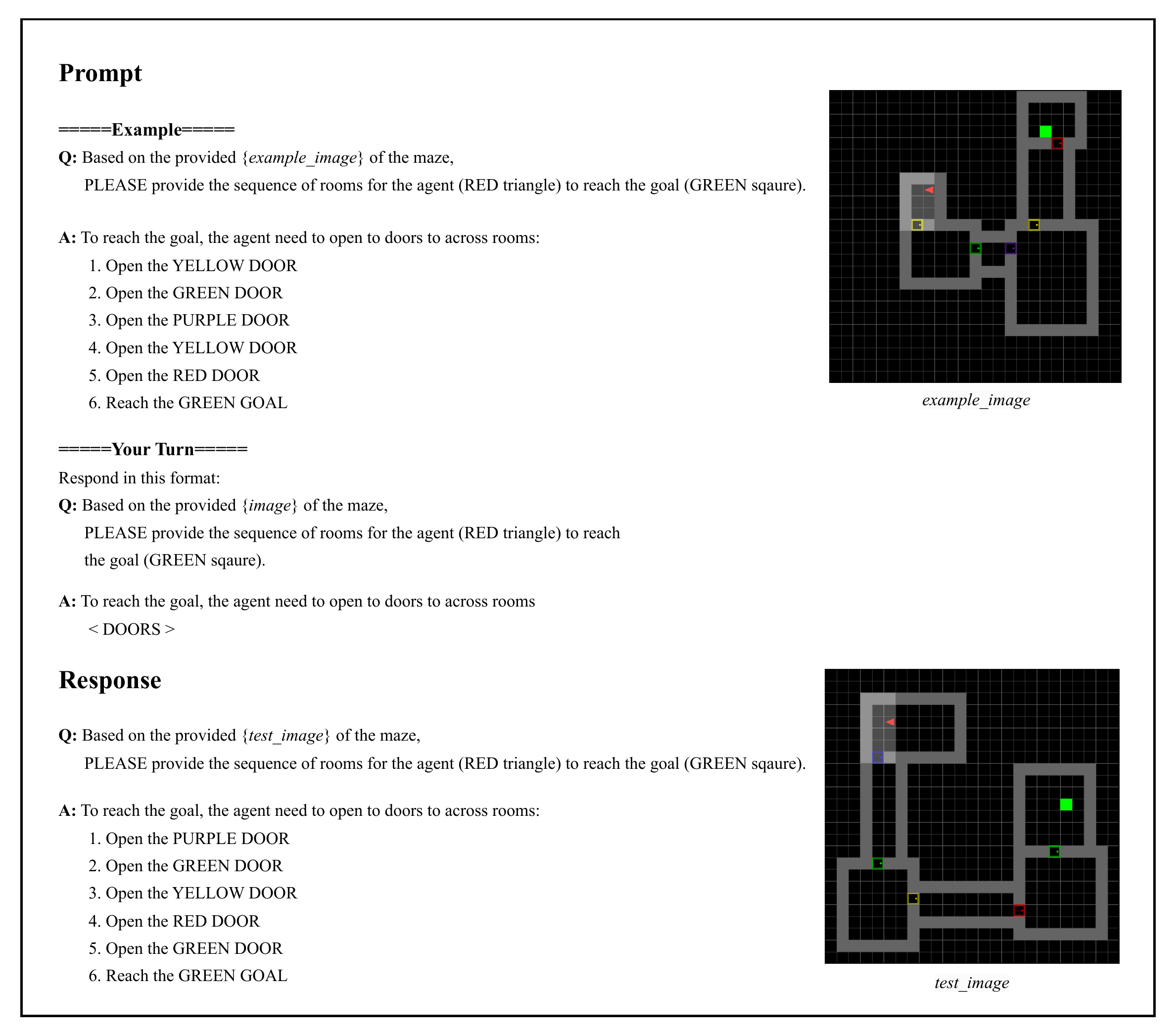}
\caption{Prompt example for our subgoal generator for MultiRoom Benchmark. The generator decomposes the goal of navigating the maze into subgoals like opening specific doors sequentially.}
\label{fig:prompting_example_multirooms}
\end{figure*}

\begin{figure*}[h]
\centering
\includegraphics[width=1.0\textwidth]{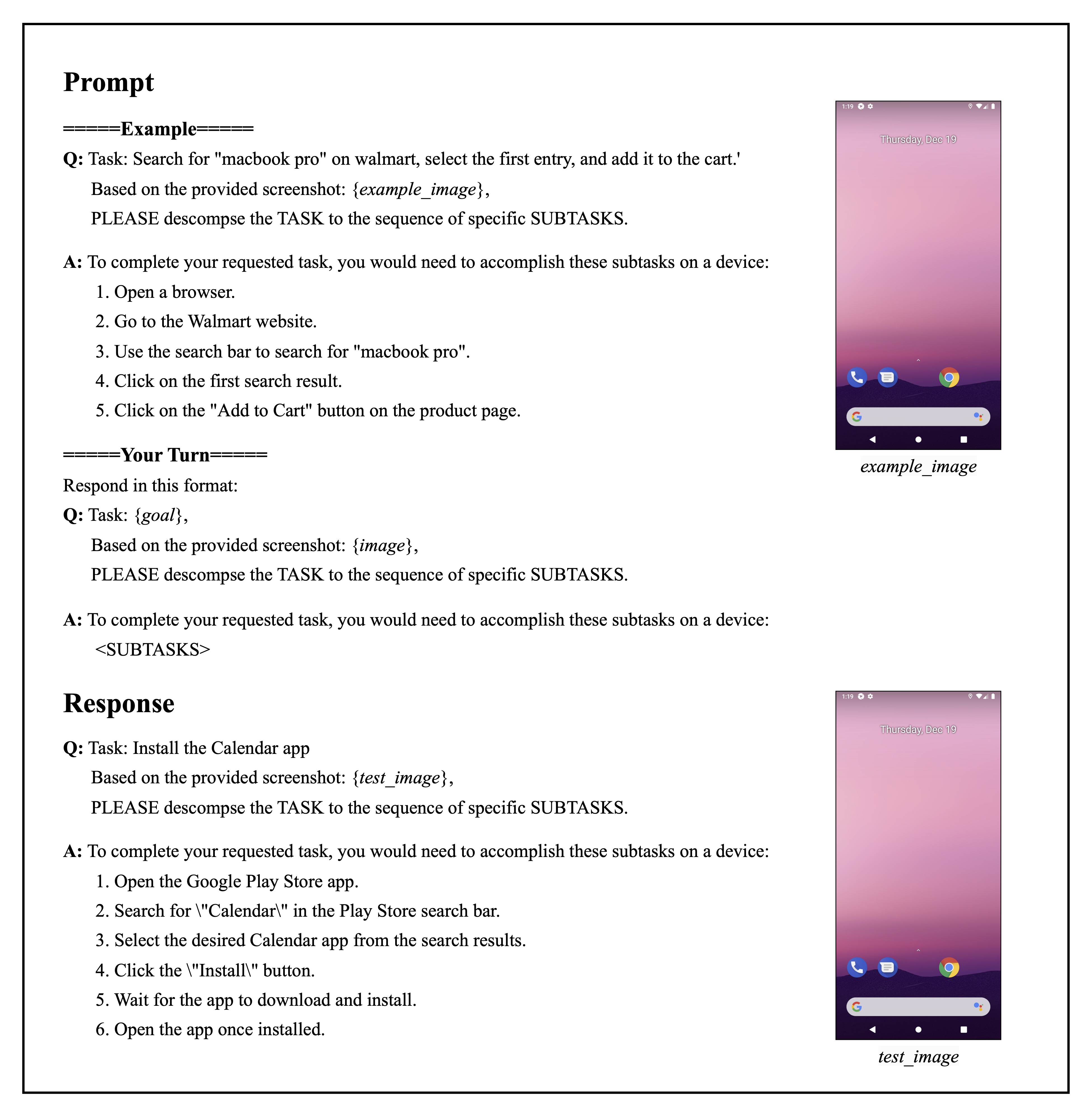}
\caption{Prompt example for our subgoal generator for tasks in AitW dataset. The generator decomposes user commands into actionable subgoals, such as opening a browser, searching for items, and selecting desired results.}
\label{fig:prompting_example_aitw}
\end{figure*}

\begin{figure*}[h]
\centering
\includegraphics[width=1.0\textwidth]{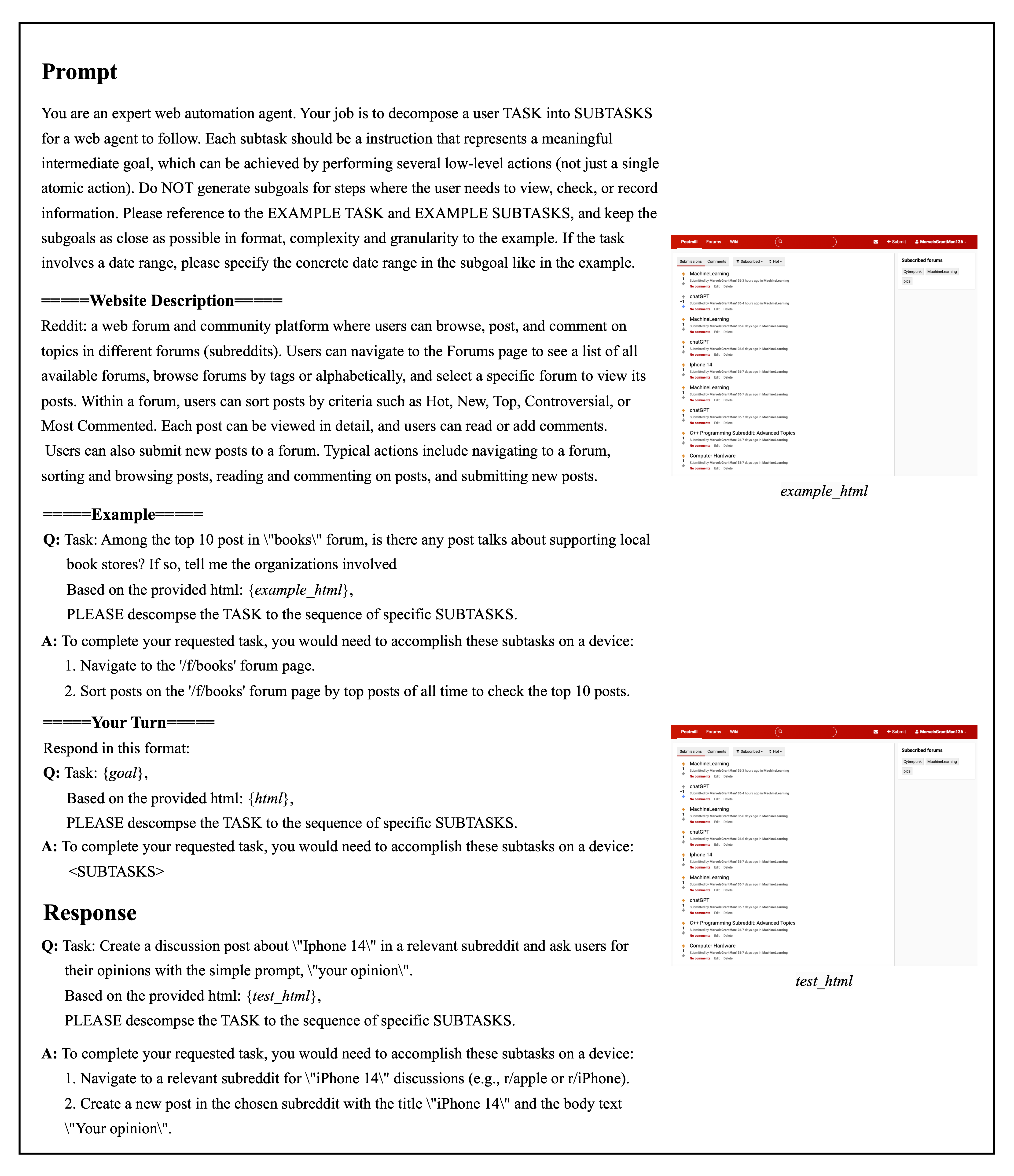}
\caption{Prompt example for our subgoal generator for WebArena-Lite tasks. The subgoal generator decomposes a user task into a sequence of more specific and actionable instructions based on the provided HTML context.}
\label{fig:prompting_example_webarena}
\end{figure*}

\clearpage
\section{Qualitative Example}
\label{appendix:qualitative_example}

We provide qualitative examples of VSC-RL applied to MultiRoom (\Cref{fig:qualitative_example_multiroom}), AitW General (\Cref{fig:qualitative_example_1}), AitW Web Shopping (\Cref{fig:qualitative_example_2}), WebArena-Lite Map (\Cref{fig:qualitative_webarena_1}), and WebArena-Lite CMS (\Cref{fig:qualitative_webarena_2}).

\begin{figure*}[h]
\centering
\includegraphics[width=0.9\textwidth]{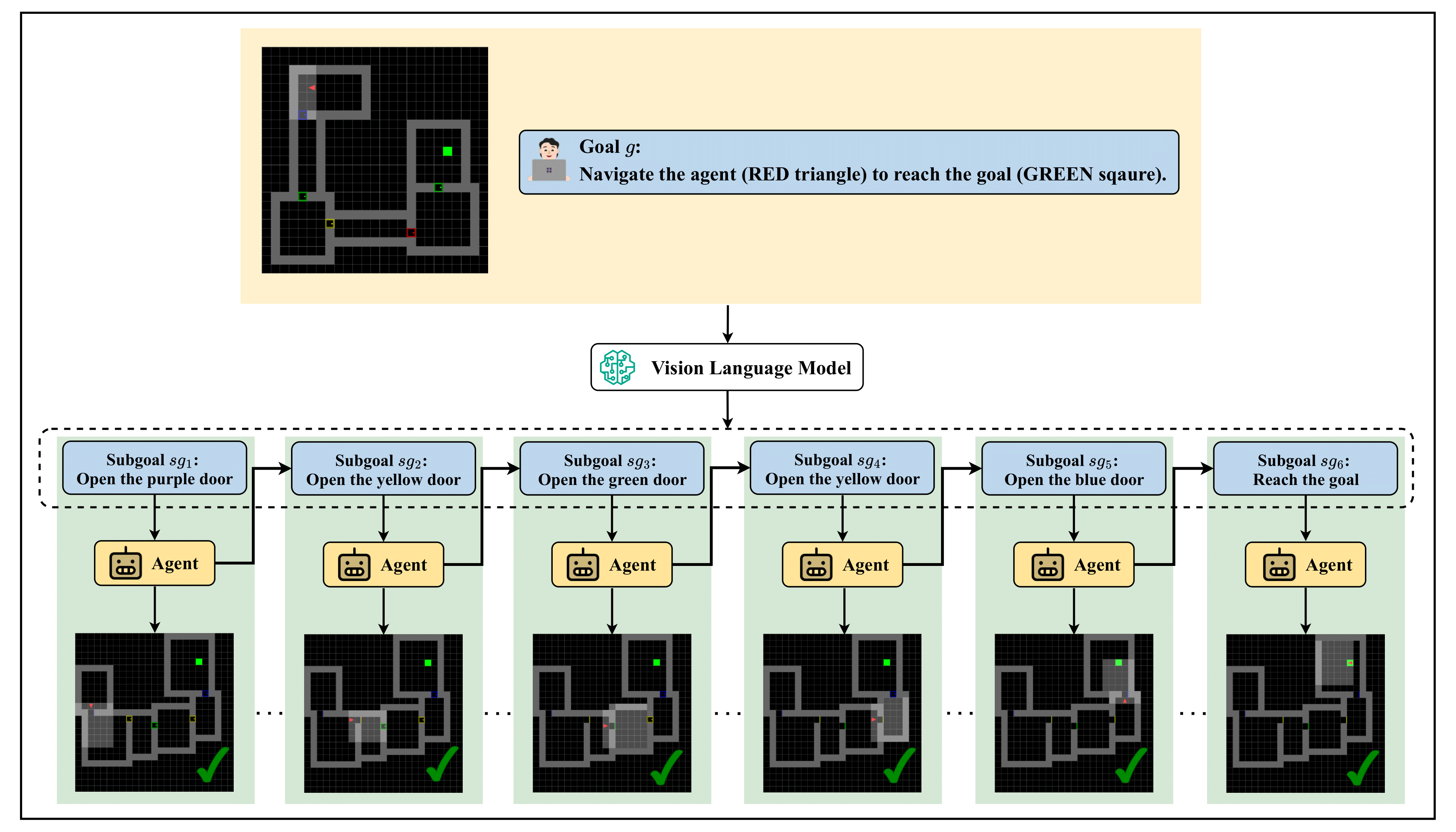}
\caption{Qualitative example of VSC-RL on the Multiroom task.}
\label{fig:qualitative_example_multiroom}
\end{figure*}

\begin{figure*}[h]
\centering
\includegraphics[width=0.9\textwidth]{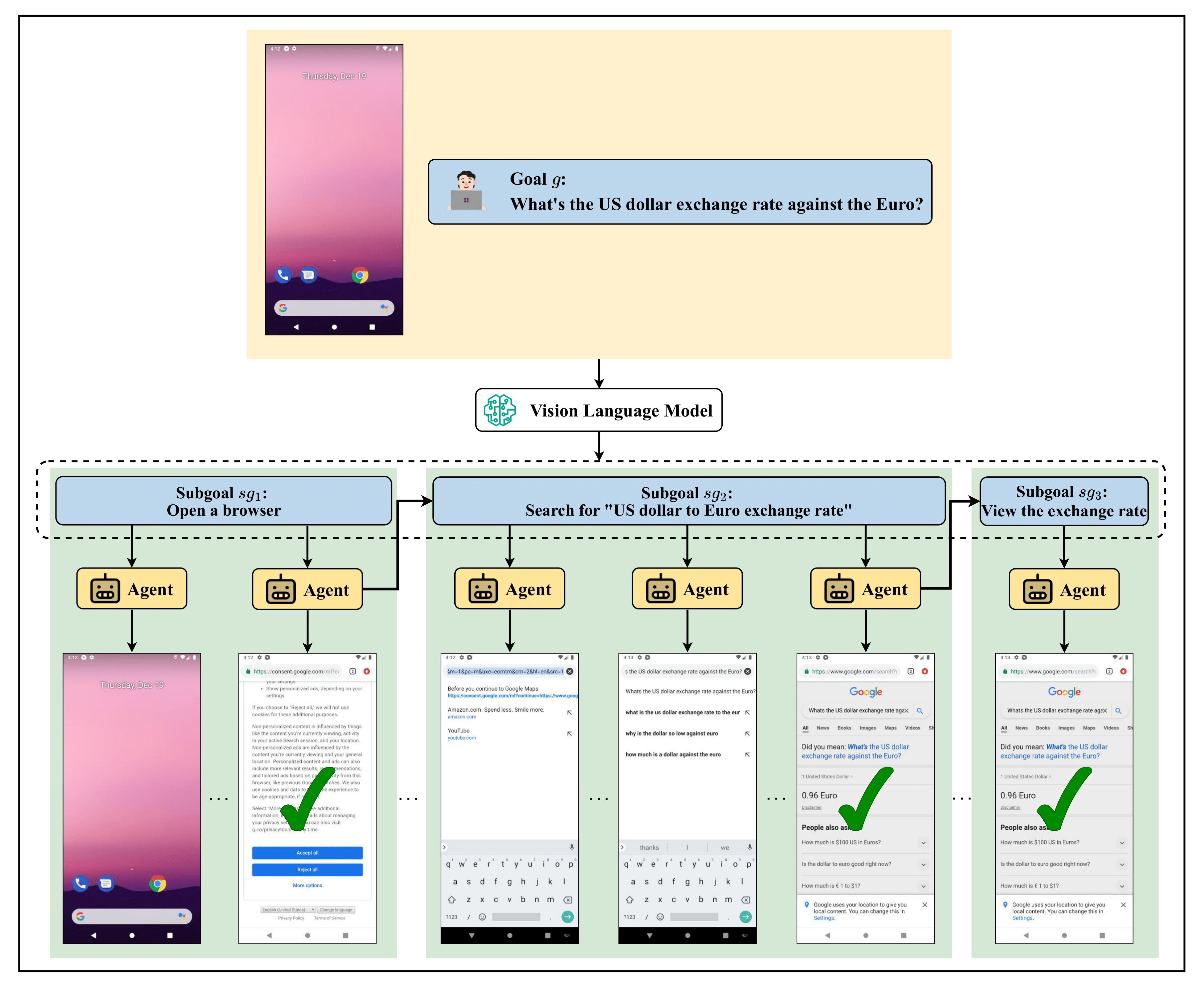}
\caption{Qualitative example of VSC-RL on the AitW General task.}
\label{fig:qualitative_example_1}
\end{figure*}

\begin{figure*}[h]
\centering
\includegraphics[width=0.9\textwidth]{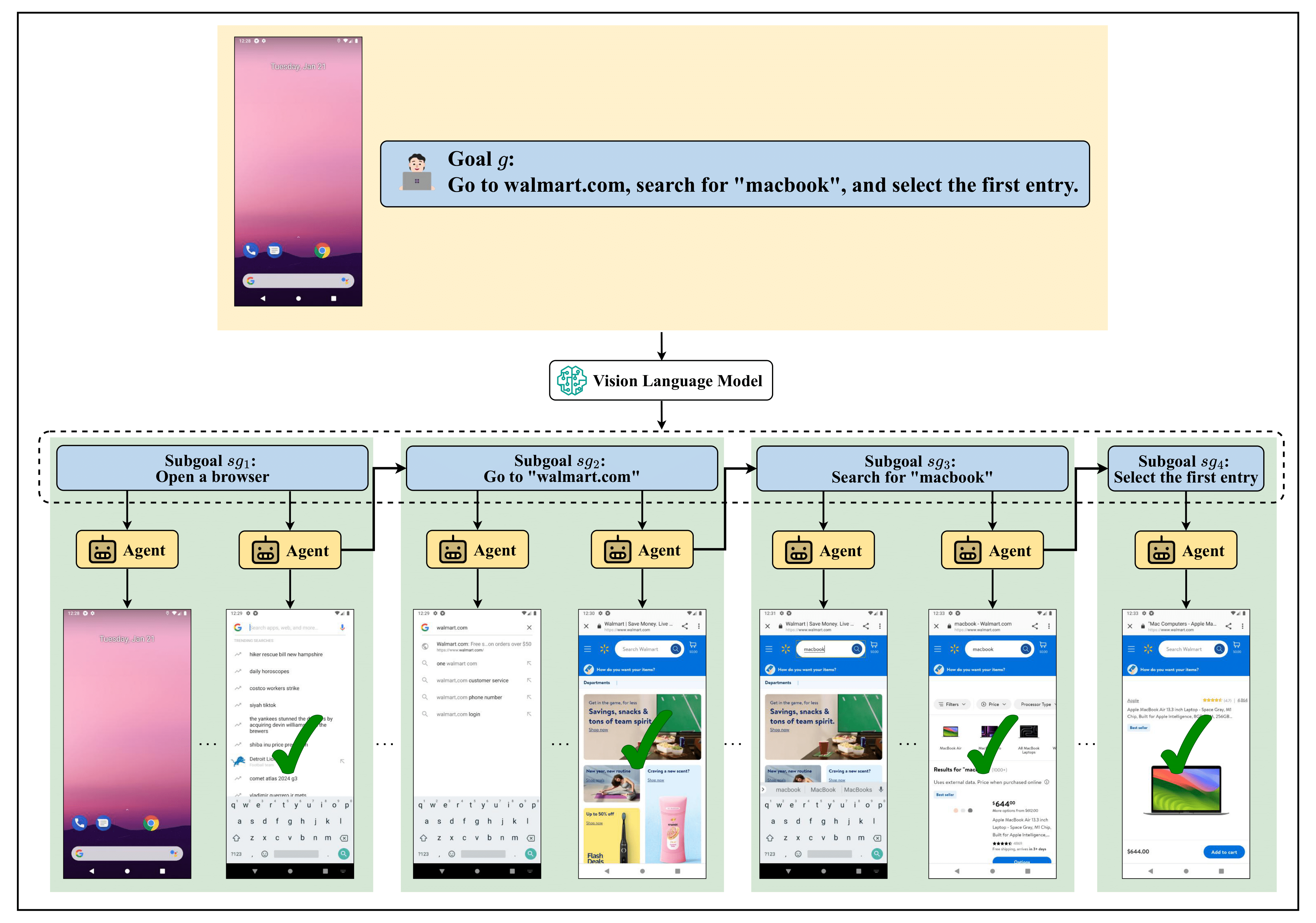}
\caption{Qualitative example of VSC-RL on the AitW Web Shopping task.}
\label{fig:qualitative_example_2}
\end{figure*}

\begin{figure*}[h]
\centering
\includegraphics[width=0.9\textwidth]{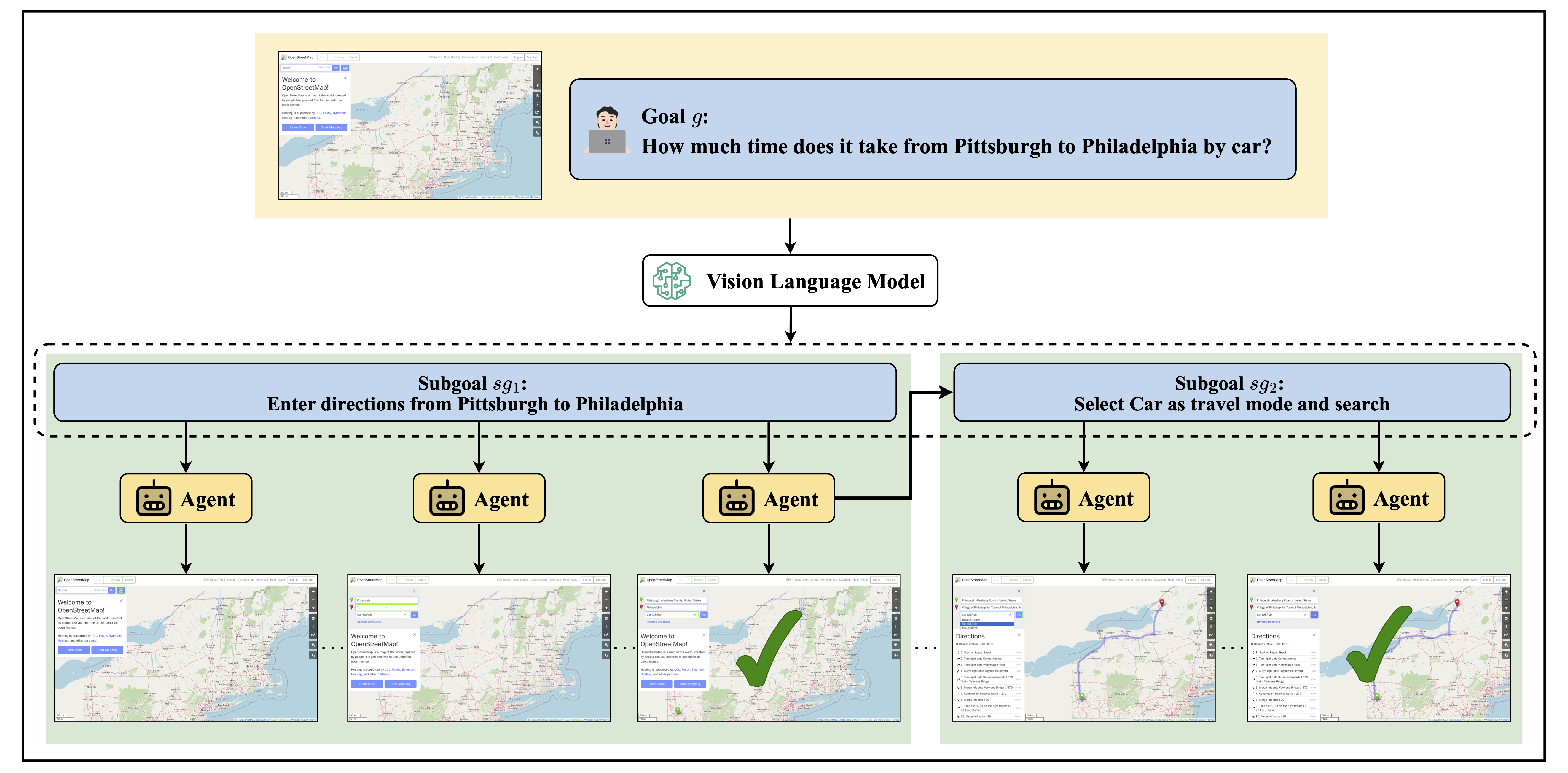}
\caption{Qualitative example of VSC-RL on the WebArena-Lite Map task.}
\label{fig:qualitative_webarena_1}
\end{figure*}

\begin{figure*}[h]
\centering
\includegraphics[width=0.9\textwidth]{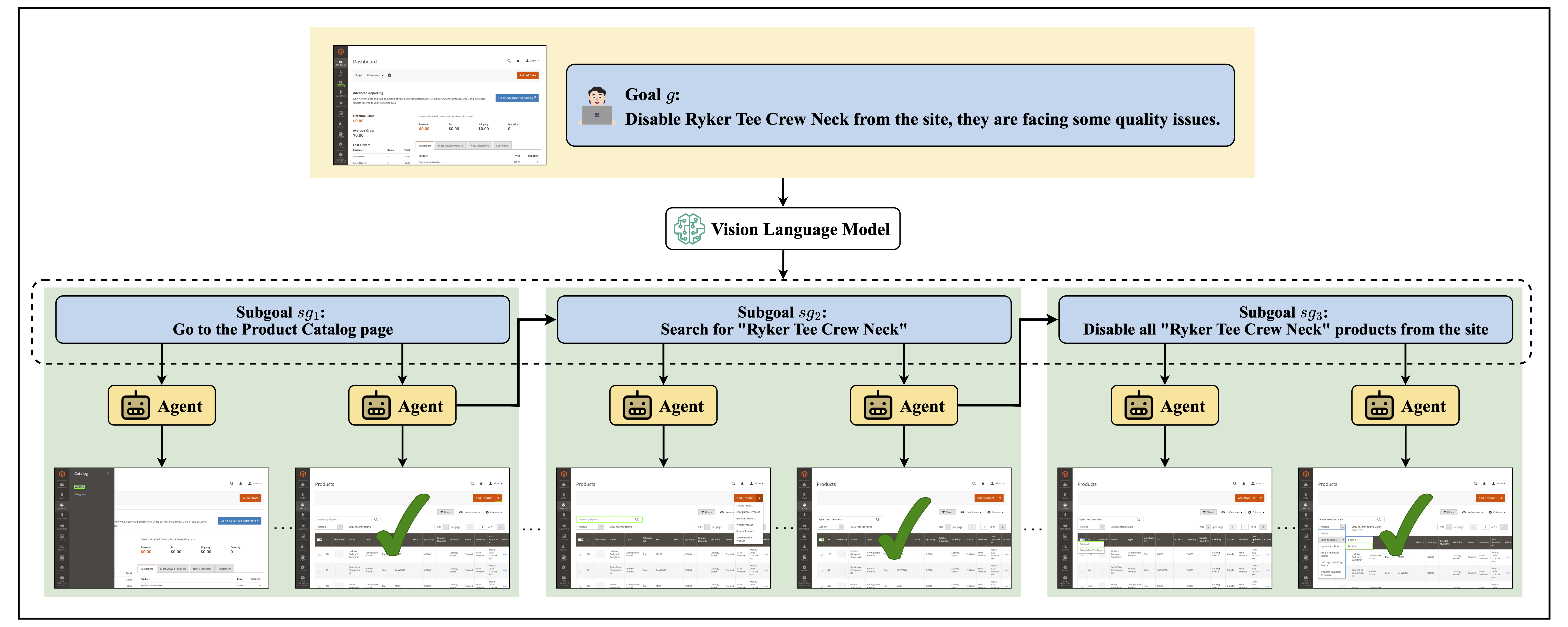}
\caption{Qualitative example of VSC-RL on the WebArena-Lite CMS task.}
\label{fig:qualitative_webarena_2}
\end{figure*}

\clearpage


\newpage

\end{document}